%% file: main.tex
\documentclass{article}

\usepackage{microtype}
\usepackage{graphicx}
\usepackage{subcaption}
\usepackage{booktabs} 

\usepackage{hyperref}

\input{macros.tex}

\usepackage{dsfont}
\usepackage[preprint]{icml2026}

\usepackage{amsmath}
\usepackage{amssymb}
\usepackage{mathtools}
\usepackage{amsthm}

\usepackage[capitalize,noabbrev]{cleveref}

\theoremstyle{plain}
\newtheorem{theorem}{Theorem}[section]
\newtheorem{proposition}[theorem]{Proposition}

\newtheorem{corollary}[theorem]{Corollary}
\theoremstyle{definition}

\newtheorem{assumption}[theorem]{Assumption}
\theoremstyle{remark}
\newtheorem{remark}[theorem]{Remark}

\usepackage[textsize=tiny]{todonotes}

\icmltitlerunning{From Collapse to Improvement}

\begin{document}

\twocolumn[
  \icmltitle{From Collapse to Improvement:  Statistical Perspectives on the Evolutionary Dynamics of Iterative Training on Contaminated Sources}



  \icmlsetsymbol{equal}{*}

  \begin{icmlauthorlist}
    \icmlauthor{Soham Bakshi}{equal,yyy}
    \icmlauthor{Sunrit Chakraborty}{equal,comp}
  \end{icmlauthorlist}

  \icmlaffiliation{yyy}{Department of Statistics, University of Michigan, Ann Arbor}
  \icmlaffiliation{comp}{Department of Statistical Science, Duke University}

  \icmlcorrespondingauthor{Soham Bakshi}{bakso@umich.edu}

  \icmlkeywords{model collapse, }

  \vskip 0.3in
]



\printAffiliationsAndNotice{}  

\begin{abstract}
  The problem of model collapse has presented new challenges in iterative training of generative models, where such training with synthetic data leads to an overall degradation of performance. This paper looks at the problem from a statistical viewpoint, illustrating that one can actually hope for improvement when models are trained on data contaminated with synthetic samples, as long as there is some amount of fresh information from the true target distribution. In particular, we consider iterative training on samples sourced from a mixture of the true target and synthetic distributions. We analyze the entire iterative evolution in a next-token prediction language model, capturing how the interplay between the mixture weights and the sample size controls the overall long-term performance. With non-trivial mixture weight of the true distribution, even if it decays over time, simply training the model in a contamination-agnostic manner with appropriate sample sizes can avoid collapse and even recover the true target distribution under certain conditions. Simulation studies support our findings and also show that such behavior is more general for other classes of models.
\end{abstract}

\section{Introduction}
Generative AI models, especially large language models (LLMs) based on transformers \cite{vaswani2017attention}, such as ChatGPT \cite{radford2018improving, radford2019language, brown2020language}, have revolutionized tasks like text generation, summarization, sentiment analysis, and text-to-image/audio translation. Other deep generative models such as diffusion models \cite{ho2020denoising, song2020score}, flow-based models \cite{dinh2014nice, dinh2016density, kingma2018glow}, GANs \cite{goodfellow2014generative, radford2015unsupervised, arjovsky2017wasserstein}, and VAEs \cite{kingma2013auto} have similarly advanced data synthesis by learning complex distributions from massive datasets. However, these models are highly data-hungry; for example, GPT-3 has 175 billion parameters and was trained on 570GB of text.

As high-quality, human-generated data becomes scarce \cite{chang2024survey, bender2019data}, models increasingly rely on synthetic data generated by earlier versions. With AI-generated content now pervasive online, e.g., LLMs contributed to ~5\% of Wikipedia \cite{brooks2024rise}, which was then used in training ChatGPT-4 \cite{del2023large}—training datasets are becoming increasingly self-referential. Similar trends are seen in vision datasets \cite{schuhmann2022laion}. This feedback loop of iterative training on synthetic data, whether intentional or not, departs from classical i.i.d. assumptions and raises critical questions about the long-term behavior and evolution of such models.

Researchers pursuing this issue soon came across the phenomenon of \emph{model collapse}~\cite{shumailov2024ai}, a compounding degenerative effect in the performance of models when recursively re-trained on synthetic data. This self-consuming cycle (also called \emph{autophagous loop} \cite{alemohammad2023self}) of retraining on self-generated synthetic data gradually drives the model far from the true data-generating distribution. While this extreme case may not be true in reality, there is a gap in understanding whether combining synthetically generated data along with fresh real data can improve the statistical estimation of the underlying true distribution over time. We investigate this fundamental question and provide theoretical insights into the long-term evolution of models under iterative training. Although a few recent works~\cite{iterativestability,fu2024theoreticalunderstanding,LLMsyntheticdata} have examined models trained on real-synthetic mixtures, theoretical understanding of the overall training evolution remains limited. In this paper, we propose a rigorous statistical framework to study \emph{evolutionary dynamics of the generative AI models iteratively trained on data sourced from a mixture of human and machine-generated content}. Our main contributions are summarized as follows
\begin{itemize}
    \item We provide theoretical analysis of iterative training process with data coming from a mixture of true distribution and previously fitted model. For a simplified setting of statistical language models, we derive the exact trajectory of this evolution and precisely characterize conditions on mixture weights and sample sizes under which statistical estimation of the target distribution improves.  
    \item We show more generally that a single-step of iterative training cannot improve estimation if there is no fresh information from the true data distribution. On the other hand, if there is, then statistical estimation can improve under mild regularity conditions.
    \item We conduct extensive simulations to demonstrate our findings and that the evolutionary behavior studied in the statistical language model extends to more general cases.
\end{itemize}

The rest of the paper is organized as follows. In Section \ref{sec:framework}, we introduce our theoretical framework and review the existing literature. In Section \ref{sec:language model}, we look at simple next-token prediction language models and analyze the iterative evolution in depth. In Section \ref{sec:one step}, we generalize our insights for a single step of this iterative training by asking the question - when is it possible to improve the estimator statistically.  Section \ref{sec:simulation} provide extensive simulations to support the theoretical insights and demonstrate the generality of our results. 
\section{Theoretical Setup}
\label{sec:framework}
We begin our discussion with a general mathematical formulation of this iterative training process. We illustrate the formulation through LLMs like ChatGPT for natural language generation as the example. Assume that there is a fixed language model which governs the way humans generate text documents and treat this as the ground-truth data distribution. The goal of a language model (e.g. ChatGPT) is to learn this true data distribution through samples of text documents with the aim of generating from that distribution. When the first version of GPT was trained, the source of training data was mostly online corpus consisting of predominantly samples from the true data distribution or human language model. However, while training ChatGPT3, the online corpus consisted of texts from different sources — some of it might already be present when training ChatGPT2, while the new training data (i.e., not used previously) could either be written by human (\emph{real} data) or generated by a previous version of GPT (\emph{synthetic} data). However, during the training process, the source of each sample in the training data is unknown, and as such, the entire dataset is trained using the model in an agnostic manner. The question is whether such training improves the performance of the model over time statistically, in the sense of approximating the true human language model.

\begin{figure}
    \centering
    \includegraphics[trim={1.6cm 5.4cm 1.6cm 4.6cm}, clip, width=1.0\linewidth]{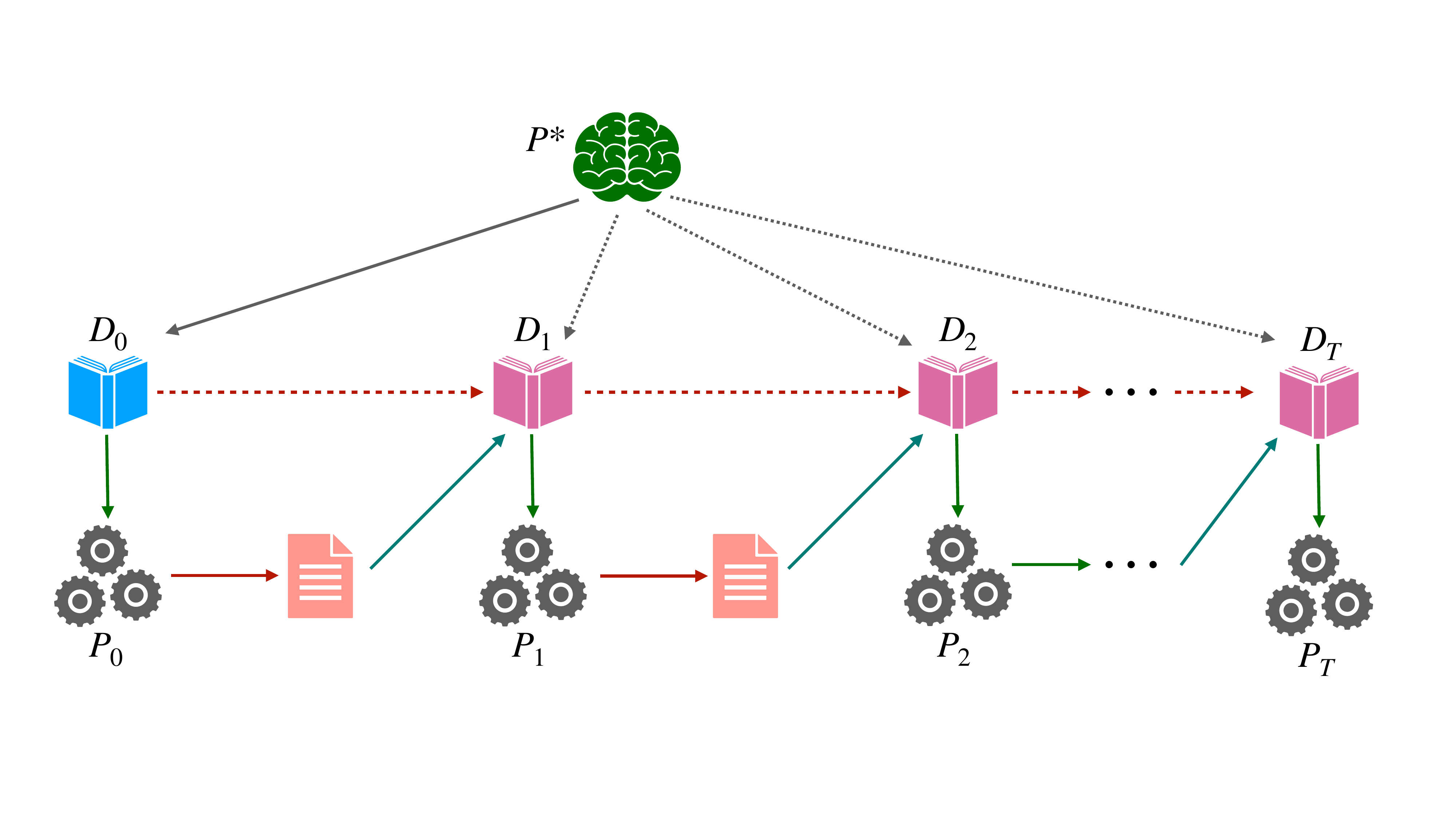}
    \caption{$P^*$ is the true underlying true data distribution, at time $t$, model $M_t$ is trained on data $D_t$, where $D_0$ is purely a sample from $P^*$, but subsequently $D_t$ consists of (i) synthetic part, (ii) accumulated part, (iii) fresh information. Can $P_t$ approximate $P^*$ well? In parametric setting, $P_t=P_{\hat{\theta}_t}$.}
    \label{fig:recursive_training}
\end{figure}

\paragraph{Mixture modeling:} Mathematically, let $P^*$ denote the true data distribution  and $\cP=\{P_{\theta}: \theta\in\Theta\}$ be the class of models. At time $t=0,1,\dots$, consider a data corpus $D_t\sim P_t$ of size $|D_t|=n_t$ based on which the model is trained and we denote the fitted model as $\hat{P}_t := P_{\hat{\theta}_t}$, where $\hat{\theta}_t$ is the estimator based on $D_t$ obtained by optimizing some loss function (e.g., it could be $L_2$-loss for regression, or logistic loss or a more general negative log-likelihood loss). At the first stage $P_0=P^*$, i.e. the data corpus $D_0$ consists of samples only from the true data distribution. Subsequently, at $t>0$, the training data $D_t$ consists of samples from mainly two different source types (i) \textit{fresh} samples from true data distribution $P^*$ (which we refer to as \emph{real data}), and (ii) \emph{synthetic data} generated from previously fitted models $\{\hat{P}_{s} : s<t\}$. Another potential source is \textit{reusing data from prior times} (accumulation), see Section \ref{sec:accu}, however our focus is primarily on the two main sources. For simplicity, assuming that only the latest version of the model is used for the synthetic generation, then the data generative mechanism at the $t+1$ time point can be modeled as a mixture \begin{equation}
 P_{t+1} = \alpha_{t+1} P^* + (1-\alpha_{t+1})\hat{P}_t   
\end{equation} where $\alpha_t$ captures the proportion of \emph{fresh} real data in the $t$-th stage data $D_t$. The complete training process is illustrated in Figure~\ref{fig:recursive_training}. Note that even through the data arises from different heterogeneous sources, the model is still fitted using the same optimization method with the fixed model class $\cP$. It is also important to recognize that the mixture assumption is at the population level -- this is different from an empirical mixture involving adding some of the real samples from the first time point to potentially all synthetic samples generated at time $t$.

The goal of this study is to understand under what conditions the iterative training with mixture of real and synthetic data can actually improve the models over time. Precisely, our interest is to examine how the expected distance of the fitted model from the true distribution, $\mathbb{E}d(P^*,\hat{P}_t)$ evolves with training iterations $t\geq0$, where $d(.,.)$ is some appropriate distance measure. We specifically explore the conditions on the mixture weights $(\at)_{t\geq 1}$ and sample sizes $(n_t)_{t\geq0}$ such that not only collapse is avoided, but also models eventually improve with training iterations, i.e. $\mathbb{E}d(P^*,\hat{P}_t)\lesssim \mathbb{E}d(P^*,\hat{P}_0)$ and ideally achieves limiting consistency, i.e. $\mathbb{E}d(P^*,\hat{P}_t)\to 0$ with growing sample sizes $n_t \uparrow \infty$ as $t\to\infty$.

\paragraph{Related Work:}
The problem of model collapse was first identified and studied in \cite{shumailov2024ai} and was empirically validated by \cite{alemohammad2023self,bohacek2023nepotistically, martinez2023combining,martinez2023towards,guo2024curiousdeclinelinguisticdiversity} for a wide variety of models and web-scale datasets. However, there are relatively fewer works that theoretically study the effect of model collapse. Theoretical analysis of model collapse was done in \cite{shumailov2023curse}  for Gaussian data, \cite{dohmatob2024model} in regression settings and \cite{dohmatob2024taletailsmodelcollapse} study the change in scaling laws when iterative training is done with synthetic data. 


\cite{alemohammad2023self} conducts experiments across different mixed training setups, indicating that incorporating real data can help alleviate model collapse. For simplified language models, \cite{LLMsyntheticdata} examine training with discrete mixtures of real and synthetic data, deriving upper bounds on synthetic data proportions to prevent collapse. However, note our population level mixture modeling differs from data mixtures considered in~\cite{LLMsyntheticdata}, where real data is injected at each training step from a fixed (finite) original sample. \cite{iterativestability} considers a general maximum likelihood estimation framework in the mixture setting involving a proportion of data from the true data distribution and analyze the stability of training process at the distribution level
under a locality assumption in parameter space. For diffusion models, \cite{fu2024theoreticalunderstanding} provide a  control of the $\TV$ distance between $\hat{P}_t$ and $P^*$, and also shows importance of balancing portions of real and synthetic data. Our work is in similar spirit with the above settings, but is directed towards studying statistical improvement or consistency instead of just focusing on preventing collapse. Furthermore, in contrast with most existing works, we consider asymptotic regimes with sample sizes growing at each iteration and seek to characterize the limiting behavior of models risk. 

 Finally, we also note some recent studies have also emerged as potential solutions to model collapse. A line of work~\cite{accumulation,kazdan2024accumulating} explores \emph{data accumulation} to prevent model collapse. However, we show accumulation cannot improve performance over the iterations (see Section~\ref{sec:accu}). While few recent works investigate strategies for curating or filtering synthetic data to prevent model collapse \cite{feng2024beyondcollapses,ferbach2024selfconsuminggenerativemodelscurated}, a thorough evaluation of their effectiveness remains absent.


\section{Iteratively Trained Language Models}
\label{sec:language model}

To theoretically study the evolution of the iterative training process for Large Language models, we focus on a simplified statistical language model with context-based next token prediction. This modeling approach is similar to \cite{LLMsyntheticdata} and theoretically boils down to multinomial parameter estimation with $K$ target tokens with model class $P_{\theta}=\{\text{Cat}(\theta): \theta\in\Delta^{K-1}\}$, , where $\text{Cat}$ denotes a categorical distribution and $\Delta^{k-1}=\{x\in\bbR^k : x_i\geq 0, \sum_i x_i = 1\}$ is the $(k-1)$-dimensional probability simplex

\paragraph{Background on Language Models:} Say we have a vocabulary of size $K$ and $C \leq K^{l}$ many possible contexts of maximum length $l$. Assuming that the language data is generated from some unknown conditional probabilities given the contexts, the probability of the next token being $k \in[K]:=\{1, \ldots, K\}$ given some context $j=\left(j_1, \ldots, j_{\ell}\right) \in[C]$ is denoted by $\mathbb{P}\rbracket{Y=k \mid X=j}$
where $X$ and $Y$ are discrete random variables specifying a context and the next token respectively. In practice, conditional probabilities are unknown but we have access to a (large) corpus of training dataset $\left\{\left(x_l, y_l\right)\right\}_{l \in[N]}$ of $N$ samples of contexts and next-token pairs represented by $x_l \in\left\{e_1, \ldots, e_C\right\}$ and $y_l \in\left\{e_1, \ldots, e_K\right\}$ where $e_i$'s denote the canonical vectors. Then we consider estimating the conditional probabilities via the Softmax classifier, which minimizes the categorical cross-entropy loss function:
$$
\underset{\mathbf{W}=\left[w_1, \ldots, w_K\right] \in \mathbb{R}^{C \times K}}{\arg \min }-\frac{1}{N} \sum_{l=1}^N y_l^{\top} \log \sigma\left(\mathbf{W}^{\top} x_l\right)
$$
where $\sigma(v)=\frac{\exp (v)}{\sum_{k=1}^K \exp \left(v_k\right)}$ is the Softmax function and the functions exp and log are applied entry-wise. Note that we choose to work with one-hot embeddings as representations for tractable theoretical analysis even though current state-of-the-art language models, the $x_l$'s are context representations computed using attention or transformer based networks. The estimates of conditional probability solving the Softmax problem are the empirical means:
\begin{align*}
  \mathbb{\widehat{P}}\rbracket{Y=k \mid X=j}&=\frac{\exp \left(\hat{w}_k^{\top} e_j\right)}{\sum_{i=1}^K \exp \left(\hat{w}_i^{\top} e_j\right)}=\frac{1}{\left|\mathcal{C}_j\right|} \sum_{l \in \mathcal{C}_j} y_{l k} \\
\text { with } \quad \mathcal{C}_j &=\left\{l \in[N] \mid x_l=e_j\right\}.  
\end{align*} 
Fixing a context say $j\in [C]$ (equivalently setting $x = e_j$), denote true conditional probabilities as \begin{align*}\theta^{*}(k) &= \mathbb{P}\rbracket{Y=k \mid X=j}, \ \forall k\in [K], \\
   \theta^{*} &= \begin{bmatrix}
      \theta^{*}(1) \cdots \theta^{*}(K)  
   \end{bmatrix}^{\top} \in \Delta^{K-1}.
\end{align*} Then the above classification problem boils down to a statistical problem of categorical proportion estimation with data
$\left\{\left(e_{j}, y_l\right)\right\}_{l \in \mathcal{C}_j}$. Here, the true data distribution distribution $P_{\theta^*}:=\text{Cat}(\theta^{*})$ which lies in the model class $\{\text{Cat}(\theta): \theta \in \Delta^{K-1}\}$.

\paragraph{Iterative Training Process:}
Now, let's frame the problem for iterative training setting where the next stage new data is generated from a Categorical with class probabilities as mixture of the true class probabilities and the current fitted probabilities. The first stage data $Y^{0} = \nbracket{Y^{0}_{1}, \ldots, Y^{0}_{n_{0}}}$ are $n_0$ i.i.d samples from the true data distribution $P_{\theta^{*}}=\text{Cat}(\theta^{*})$, using them we get the initial estimates as $$\widehat{\theta}_{0}(k) = \frac{\sum_{j=1}^{n_{0}}\mathds{1}[Y^{0}_{j}=k]}{n_{0}}.$$ For any training time $t\geq 0$, given the current estimate $\widehat{\theta}_{t}$, the next stage data 
$Y^{t+1} = \nbracket{Y^{t+1}_{1}, \ldots, Y^{t+1}_{n_{t+1}}}$ is generated $\iid$ from 
\begin{align*}
P_{t+1}:=&\text{Cat}(\theta_{t+1}) =\att P_{\theta^{*}}+ (1-\att)\widehat{P}_t \\=&  \text{Cat}(\alpha_{t+1}\theta^{*} + (1-\alpha_{t+1})\widehat{\theta}_{t}),
\end{align*}  which recursively yields the next stage estimate $\widehat{\theta}_{t+1}$ using the corresponding sample proportions from $Y^{t+1}$ (agnostic of data source). Our aim here is to understand the entire trajectory of $\hat{\theta}_t$ (equivalently $\widehat{P}_t=P_{\widehat{\theta}_{t}}$) in terms of $$R_t:=\mathbb{E}\Vert\hat{\theta}_t - \theta^*\Vert_2^2,$$ the risk of $t$-th estimate, measuring the quality of model fit.

\begin{theorem}
\label{thm:1}
In the above setting, the sequence $\nbracket{R_t}_{t\geq 1}$(as defined above) satisfies the following recurrence
        $$R_t = \frac{n_{0}}{n_t}R_{0} + \frac{n_t-1}{n_t} (1-\alpha_t)^2 R_{t-1}$$
        where $R_0=\frac{\sum_k \theta^*(k)(1-\theta^*(k))}{n_{0}}=\frac{1-\Vert\theta^*\Vert_2^2}{n_{0}}.$
\end{theorem}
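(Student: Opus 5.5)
The plan is to condition on the previous estimate and use a bias--variance decomposition of the conditional risk. Let $\mathcal{F}_{t-1}$ be the $\sigma$-field generated by the data $Y^0,\dots,Y^{t-1}$, so that $\widehat{\theta}_{t-1}$ is $\mathcal{F}_{t-1}$-measurable. Given $\mathcal{F}_{t-1}$, the vector $n_t\widehat{\theta}_t$ is $\text{Multinomial}(n_t,\theta_t)$ with $\theta_t=\alpha_t\theta^*+(1-\alpha_t)\widehat{\theta}_{t-1}$. The first step records the standard multinomial moments
$$\mathbb{E}[\widehat{\theta}_t\mid\mathcal{F}_{t-1}]=\theta_t,\qquad \mathbb{E}\bigl[\Vert\widehat{\theta}_t-\theta_t\Vert_2^2\mid\mathcal{F}_{t-1}\bigr]=\frac{1-\Vert\theta_t\Vert_2^2}{n_t},$$
where the second identity comes from summing the coordinate variances $\theta_t(k)(1-\theta_t(k))/n_t$. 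Applied at $t=0$ with $\theta_0=\theta^*$, this same computation already gives the stated formula for $R_0$.

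Next I expand $\Vert\widehat{\theta}_t-\theta^*\Vert_2^2=\Vert\widehat{\theta}_t-\theta_t\Vert_2^2+2\langle\widehat{\theta}_t-\theta_t,\theta_t-\theta^*\rangle+\Vert\theta_t-\theta^*\Vert_2^2$ and take the conditional expectation. The cross term vanishes, and $\theta_t-\theta^*=(1-\alpha_t)(\widehat{\theta}_{t-1}-\theta^*)$. This gives
$$\mathbb{E}\bigl[\Vert\widehat{\theta}_t-\theta^*\Vert_2^2\mid\mathcal{F}_{t-1}\bigr]=(1-\alpha_t)^2\Vert\widehat{\theta}_{t-1}-\theta^*\Vert_2^2+\frac{1-\Vert\theta_t\Vert_2^2}{n_t}.$$

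The only nontrivial step is taking the outer expectation of the variance term, since $\Vert\theta_t\Vert_2^2$ is random. I will write $\theta_t=\theta^*+\delta_t$ with $\delta_t=(1-\alpha_t)(\widehat{\theta}_{t-1}-\theta^*)$. Then
$$1-\Vert\theta_t\Vert_2^2=(1-\Vert\theta^*\Vert_2^2)-2\langle\theta^*,\delta_t\rangle-\Vert\delta_t\Vert_2^2 .$$
To kill the linear term, I prove by induction via the tower property that every $\widehat{\theta}_s$ is unbiased: $\mathbb{E}\widehat{\theta}_s=\alpha_s\theta^*+(1-\alpha_s)\mathbb{E}\widehat{\theta}_{s-1}=\theta^*$, with base case $\mathbb{E}\widehat{\theta}_0=\theta^*$. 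Hence $\mathbb{E}\delta_t=0$, and since $\mathbb{E}\Vert\delta_t\Vert_2^2=(1-\alpha_t)^2R_{t-1}$,
$$\mathbb{E}\bigl[1-\Vert\theta_t\Vert_2^2\bigr]=n_0R_0-(1-\alpha_t)^2R_{t-1}.$$

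Finally, taking the full expectation of the conditional identity and substituting this gives
$$R_t=(1-\alpha_t)^2R_{t-1}+\frac{n_0R_0-(1-\alpha_t)^2R_{t-1}}{n_t}=\frac{n_0}{n_t}R_0+\frac{n_t-1}{n_t}(1-\alpha_t)^2R_{t-1},$$
which is the claimed recurrence.
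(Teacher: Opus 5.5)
Your proof is correct and is essentially the paper's argument. The paper applies the law of total variance coordinatewise to the conditional binomial counts $n_t\widehat{\theta}_t(k)$ and sums over $k$; this is exactly your conditional bias--variance decomposition written coordinate by coordinate, and unbiasedness via the tower property is used at the same point to evaluate $\mathbb{E}[1-\Vert\theta_t\Vert_2^2]$. One small difference in your favour: your induction states the conditional mean correctly as $\alpha_s\theta^*+(1-\alpha_s)\widehat{\theta}_{s-1}$, whereas the paper's write-up misstates it as $\widehat{\theta}_t$, a slip that does not affect the conclusion $\mathbb{E}\widehat{\theta}_t=\theta^*$.
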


From now on, let $R_{\infty}$ denote the limiting model risk $\lim_{t\to\infty}R_{t}$, when the limit exists (otherwise $\liminf_{t}R_{t}$). Observe that the trajectory of $\nbracket{R_t}_{t\geq 1}$ is governed by the mixture weights $\nbracket{\alpha_t}_{t\geq 1}$ and sample sizes $\nbracket{n_t}_{t\geq 0}$. Under different regimes of $\nbracket{\alpha_t}_{t\geq 1}$ and $\nbracket{n_t}_{t\geq 0}$, we analyze the long-term behavior of the model risk, focusing on the following scenarios based on the degree of collapse quantified by the excess limiting risk $R_{\infty}-R_{0}$: 
\begin{enumerate}
    \item \emph{Consistency}: models consistently improves over time, achieving vanishing risk in the limit, $R_{\infty}=0$.
    \item \emph{Iterative Improvement}: models avoid collapse and continue to improve or stabilize at a risk level better than or close to the initial risk: $R_{\infty}\leq R_{0}$.
    \item \emph{Collapse}: models fail to improve, with limiting risk exceeding the standard risk, e.g. $R_{\infty} > R_{0}$.
\end{enumerate}

First we analyze the case when the mixture weight of real data is bounded away from 0. 

\begin{corollary}
\label{cor:1}
Assume the  mixture weight of real data maintains $\inf_{t}\alpha_t=\alpha > 0$. Then based on the evolution of the training size $n_t$, the following holds: \begin{enumerate}
    \item  If $n_t=n$, then $R_t\in [R_0, nR_0)$. More precisely,: $$R_{0}\leq R_\infty \leq R_{0}\left(1 - \frac{n-1}{n}(1-\alpha)^2\right)^{-1}.$$ 
    \item  If $n_t\uparrow \infty$, then  $R_\infty= 0$ (consistency for any increasing sample size sequence $n_t$).
\end{enumerate}
\end{corollary}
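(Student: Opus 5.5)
Everything follows from the recurrence of Theorem~\ref{thm:1}, which I will treat as a linear first-order recursion
\[
R_t = a_t + b_t R_{t-1},\qquad a_t=\frac{n_0}{n_t}R_0,\quad b_t=\frac{n_t-1}{n_t}(1-\alpha_t)^2 .
\]
Since $\alpha_t\ge\alpha>0$, we have $b_t\le (1-\alpha)^2<1$ uniformly in $t$. So the recursion is a contraction with a forcing term $a_t$, and both parts come down to bounding this forcing term.

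For part~1, with $n_t=n=n_0$, the forcing term is constant, $a_t=R_0$. The lower bound is immediate: $R_t\ge a_t=R_0$ for every $t\ge1$ because $b_tR_{t-1}\ge0$. For the upper bound, set $\beta:=\frac{n-1}{n}(1-\alpha)^2<1$. Then $R_t\le R_0+\beta R_{t-1}$, and by induction
\[
R_t\le R_0\sum_{s=0}^{t}\beta^s\le \frac{R_0}{1-\beta}.
\]
This gives the stated bound on $\limsup_t R_t$, and hence on $R_\infty$ whether it is read as a limit or a $\liminf$. For the cruder claim $R_t<nR_0$, I will use $1-\beta\ge 1-\frac{n-1}{n}=\frac1n$, so $\frac{R_0}{1-\beta}\le nR_0$. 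Strictness comes from the finite geometric sum, or from $\alpha>0$.

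For part~2, with $n_t\uparrow\infty$, we have $a_t=n_0R_0/n_t\to0$ and $b_t\le\beta_0:=(1-\alpha)^2<1$. Unrolling the recursion gives
\[
R_t\le \sum_{s=1}^{t}\beta_0^{\,t-s}a_s+\beta_0^{\,t}R_0 .
\]
This is a convolution of a sequence tending to zero with a summable geometric kernel, so it tends to zero. Concretely, I will fix $\varepsilon>0$ and choose $T$ with $a_s<\varepsilon$ for $s>T$. The tail of the sum is then at most $\varepsilon/(1-\beta_0)$, and the head is at most $\beta_0^{\,t-T}\sum_{s\le T}a_s\to0$. Therefore $\limsup R_t\le \varepsilon/(1-\beta_0)$ for every $\varepsilon$, so $R_\infty=0$. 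Equivalently, I could apply $\limsup$ to both sides of the recursion and use boundedness of $R_t$, which gives $L\le\beta_0 L$ and hence $L=0$.

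I expect no real obstacle. The only points needing care are the uniform contraction $b_t\le(1-\alpha)^2$, which uses $\inf_t\alpha_t=\alpha>0$, and reading ``$n_t\uparrow\infty$'' as ``$n_t\to\infty$'' (monotonicity is not needed). I also need to note that the lower bound $R_t\ge R_0$ in part~1 relies on $n_t=n_0$, so that $a_t=R_0$ exactly.
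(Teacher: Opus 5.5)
Your proposal is correct and follows essentially the paper's route. Both arguments read Theorem~\ref{thm:1} as a linear recursion whose coefficient is uniformly at most $(1-\alpha)^2$ and bound part~1 by the resulting geometric series. The paper solves the constant-$\alpha$ recursion in closed form and then argues by comparison; your direct bound $b_t\le\frac{n-1}{n}(1-\alpha)^2$ makes that comparison explicit, and you also give the lower bound $R_t\ge R_0$, which the paper leaves implicit. For part~2, both unroll the recursion and split the geometric convolution, and your fixed-$T$ split has the small advantage of not needing $n_t$ to be monotone.

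The one caveat is the degenerate case $n=1$ (or $R_0=0$): there $R_t\equiv R_0=nR_0$, so the strict inequality $R_t<nR_0$ fails and your remark that ``strictness comes from the finite geometric sum, or from $\alpha>0$'' does not apply. This is a defect of the statement rather than of your argument.
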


Note with fixed sample size, future models can never be better than the initial model, but the gap $R_{\infty}-R_{0}$ depends on the mixture weights and is smaller for bigger $\alpha$. 

\begin{remark}\label{remark: fixed alpha and n}
    The proof of the above corollary demonstrates that if $\alpha_t=\alpha$ is fixed and $n_t=n$ for all $t$, then the limiting risk is \textit{exactly} the upper bound, i.e., $R_{\infty} = R_0\left(1 - \frac{n-1}{n}(1-\alpha)^2\right)^{-1}.$
    The bound appears due to the condition that $\inf_t \alpha_t=\alpha$. When $\alpha=0$ (purely synthetic data), the limit becomes $nR_0$, while when $\alpha=1$ (fresh real data), the limit is $R_0$, as expected.
\end{remark}

Now, let's consider the situation when truth mixture weights are $\at > 0$ but decays to 0, that is $\at \downarrow 0$.

\begin{proposition}{(Decaying Truth  Mixture Weights)}\label{prop:decay}
\begin{enumerate}
    \item For any sequence $\at \downarrow 0$, a `large enough' sample size: $$ n_t > \left\lceil \frac{n_{t-1}-1}{\at(2-\at)}\right \rceil+1$$ ensures $t$-th stage improvement i.e. $R_{t}<R_{t-1}$. Consequently, if the above recursive inequality is satisfied for all $\nbracket{n_t: t\geq 1}$ , then $\nbracket{R_t}_{t\geq 0}$ is a strictly decreasing sequence and $R_{\infty}<R_{0}$. 
    \item If $\at$ satisfies $\sum_{t=1}^{\infty} \at = \infty$, then any sequence of sample sizes with $\sum_{t=1}^{\infty} 1/n_t< \infty$ ensures consistency, that is $R_{\infty}=0$. 
\end{enumerate}
\end{proposition}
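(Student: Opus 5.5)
The plan is to work directly with the recurrence of Theorem~\ref{thm:1}. For brevity write
\[
c_t := \frac{n_t-1}{n_t}(1-\alpha_t)^2 \in [0,1),
\]
so that $R_t = \frac{n_0}{n_t}R_0 + c_t R_{t-1}$. Both parts come from this one relation. Part 1 is a one-step comparison. Part 2 unrolls the recursion and controls a discrete convolution.

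\emph{Part 1.} The key observation is a lower bound on $R_{t-1}$ that follows from the recurrence itself. Since the second term is nonnegative, $R_{t-1} \ge \frac{n_0}{n_{t-1}}R_0$ for $t-1\ge 1$, and this holds with equality when $t-1=0$. Hence $n_0R_0 \le n_{t-1}R_{t-1}$. Substituting into the recurrence gives
\[
R_t \le R_{t-1}\left(\frac{n_{t-1}}{n_t} + \frac{n_t-1}{n_t}(1-\alpha_t)^2\right).
\]
So $R_t<R_{t-1}$ follows once $n_{t-1} + (n_t-1)(1-\alpha_t)^2 < n_t$. Using $1-(1-\alpha_t)^2=\alpha_t(2-\alpha_t)$, this condition rearranges to
\[
n_t - 1 > \frac{n_{t-1}-1}{\alpha_t(2-\alpha_t)},
\]
which is implied by the stated condition $n_t > \left\lceil \frac{n_{t-1}-1}{\alpha_t(2-\alpha_t)}\right\rceil + 1$. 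Note that $R_{t-1}>0$ whenever $\theta^*$ is not a point mass, so dividing by it is harmless. If the condition holds for every $t\ge1$, then $(R_t)$ is strictly decreasing. Being bounded below by $0$, it converges, and $R_\infty \le R_1 < R_0$.

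\emph{Part 2.} Unrolling the recurrence gives
\[
R_t = R_0\prod_{u=1}^t c_u + n_0R_0\sum_{s=1}^t \frac{1}{n_s}\prod_{u=s+1}^t c_u .
\]
For the first term, $c_u \le (1-\alpha_u)^2 \le e^{-\alpha_u}$. Hence $\prod_{u=m+1}^t c_u \le \exp\bigl(-\sum_{u=m+1}^t\alpha_u\bigr) \to 0$ as $t\to\infty$, for every fixed $m$, because $\sum\alpha_u=\infty$. In particular the first term vanishes.

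The second term is the main obstacle. It is a convolution of the summable sequence $(1/n_s)$ with products that are bounded by $1$ and decay for each fixed starting index. I would handle it with a Toeplitz-type splitting. Fix $\varepsilon>0$ and choose $m$ so that $\sum_{s>m}1/n_s<\varepsilon$. The terms with $s>m$ then contribute at most $\varepsilon$, since every product is at most $1$. The terms with $s\le m$ contribute at most
\[
\Bigl(\sum_{s=1}^\infty \frac{1}{n_s}\Bigr)\prod_{u=m+1}^t c_u,
\]
which tends to $0$ as $t\to\infty$ by the previous paragraph. Therefore $\limsup_t R_t \le n_0R_0\,\varepsilon$. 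Since $\varepsilon$ is arbitrary, $R_\infty=0$.
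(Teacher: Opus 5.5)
Your proof is correct and follows the paper's route. Part 1 uses the same bound $n_0R_0\le n_{t-1}R_{t-1}$ to get $R_t/R_{t-1}\le \frac{n_{t-1}}{n_t}+\frac{n_t-1}{n_t}(1-\alpha_t)^2$, and Part 2 uses the same unrolling into a vanishing product term plus a convolution sum split at a fixed index. One point in your favour: your treatment of the tail indices $s>m$ (bound the products by $1$ and use that the tail of $\sum_s 1/n_s$ is small) is the correct one. The paper instead bounds $\exp\bigl(-2\sum_{i=j+1}^{t}\alpha_i\bigr)\le e^{-2A}$ uniformly over $J<j\le t$, which fails for $j$ near $t$ (the exponent is $0$ at $j=t$), so your version repairs a gap in the paper's argument.
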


\begin{remark}
The condition in Part (2) interprets as \textit{sufficiently slowly decaying} $\at$ and examples include $\alpha_t\asymp 1/\log(t)$ or $\alpha_t\asymp 1/\sqrt{t}$ or $\alpha_t\asymp 1/t$ or $\alpha_t\asymp 1/t(\log t)^\gamma$ for some $\gamma\in(0,1)$ -- in all of such cases, the choice of $n_t\asymp t^2$ ensures consistency. 
\end{remark}

The above result provides sufficient (but not necessary) conditions under which improvement or consistency is achieved at the limit $t \to \infty$. Unlike the case 2 in Corollary~\ref{cor:1}, where any sequence $n_{t}\uparrow\infty$ ensures consistency, these conditions on the evolution of sample sizes depend on the sequence of decaying mixture weights. 
Observe (from part 1) that to ensure improvement $n_t$ needs to grow faster for smaller truth mixture weight $\at$, essentially suggesting adjusted scaling laws while training with synthetic data, as shown in~\cite{dohmatob2024taletailsmodelcollapse}. One potential limitation of Proposition \ref{prop:decay} is that the sample size $n_t$ depends on the potentially unknown $\alpha_t$, which is problematic in real-life scenarios -- although this is expected. However, the purpose of the result is to demonstrate that there exists sequences $n_t$ such that estimation can indeed be improved even by agnostic model fitting with contaminated data. 


\subsubsection*{Using a classifier before training}
    Recent work \citep{zhang2024regurgitative, feng2024beyondcollapses} has proposed using human or AI detector to filter out synthetic data to avoid model collapse. We provide a brief discussion of this idea under our framework.
    Assume that at stage $t$, there is a classifier $\cC_t$ that is trained to differentiate between $P^*$ (real) and $\hat{P}_t$. The data $Y^{t+1}$ is then passed through this classifier and only the ones that are classified as `real' are retained for training $\hat{P}_{t+1}$. Assume a simplified setting where label-agnostic $\cC_t$ has type 1 error rate $e_{1t}$ and type 2 error rate $e_{2t}$ (where the null corresponds to $P^*$). Recall that under our assumptions, $Y^{t+1}$ is drawn from a mixture distribution with $\alpha_{t+1}$ proportion coming from $P^*$ and $(1-\alpha_{t+1})$ from $\hat{P}_t$. If $\tilde{Y}^{t+1}$ is the filtered data obtained from $Y^{t+1}$ by taking only those classified as coming from $P^*$, then it is easy to see that $\tilde{Y}^{t+1}$ also has a mixture distribution, with new mixture weight corresponding to $P^*$ as $$\tilde{\alpha}_{t+1}=\frac{\alpha_{t+1}(1-e_{1t})}{\alpha_{t+1}(1-e_{1t}) + (1-\alpha_{t+1})e_{2t}}$$ and the remaining $1-\tilde{\alpha}_{t+1}$ from $\hat{P}_t$. Also, note that if the size of $Y^{t+1}$ is $n_t$, then the expected size of $\tilde{Y}^{t+1}$ is $$\tilde{n}_{t+1}=[\alpha_{t+1}(1-e_{1t})+(1-\alpha_{t+1})e_{2t}] n_{t+1}.$$

    For example, for a random classifier with $e_{1t}=e_{2t}=1/2$, we observe that $\tilde{\alpha}_{t+1}=\alpha_{t+1}$, while $\tilde{n}_{t+1}=n_{t+1}/2$.

    \paragraph{Oracle classifier:} In practice, training of $\cC_t$ has to be done through samples from both $P^*$ and $\hat{P}_t$. Since $\hat{P}_t$ is the model trained at the last step, one can generate infinitely many samples from this distribution. Assume that there is an oracle who also has access to infinite samples from $P^*$. In this case, the information theoretic lower bound of the test is
    $$e_{1t} + e_{2t} \geq 1-\TV(P^*, \hat{P}_t).$$
    If the oracle classifier attains this (e.g., the Neyman-Pearson LRT) -- for a fixed type 1 error rate $e_{1t}=e_1$, it has a type 2 error rate $e_{2t}=1-e_1-\TV(P^*, \hat{P}_{t})$. In this case, the training of the model in the next stage is based on $\tilde{Y}_{t+1}$ whose mixture weight for $P^*$ is given by
    \begin{align*}
        \tilde{\alpha}_{t+1} &= \frac{\alpha_{t+1}(1-e_1)}{\alpha_{t+1}(1-e_1) + (1-\alpha_{t+1})(1-e_1 - \TV(P^*,\hat{P}_t))} \\&= \frac{\alpha_{t+1}(1-e_1)}{1-e_1 - 0.5(1-\alpha_{t+1})\|\hat{\theta}_t-\theta^*\|_1} \geq \alpha_{t+1}.
    \end{align*}
    This gives us a way to analyze $R_\infty$ in such cases, using same technique from Theorem \ref{thm:1}. See Appendix \ref{app:classifier}, \ref{app:adaptive} for more details.

\subsection{Iterative Retraining On Purely Synthetic Data}
Now, we turn to the case when there is no fresh real data except $D_{0}$, i.e.  for all $t\geq 1$, $\at=0$ and the $t$-th model $\widehat{P}_{t}$ is trained on \emph{purely} synthetic data $Y^t$, generated from previous model fit $\widehat{P}_{t-1}$. For fixed sample sizes $n_t=n$, collapse cannot be avoided and the limiting risk is $nR_0$, as discussed in Remark \ref{remark: fixed alpha and n}. The following corollary shows that increasing $n_t$ might be beneficial in this case. 

\begin{corollary}
\label{cor:2}
For iterative retraining with purely synthetic data with sample sizes $\nbracket{n_t: t\geq 0}$, the following holds: $n_{0}R_{0}\sum_{s\leq t} (1/2^s n_s)<R_t < n_{0}R_{0}\sum_{s\leq t} (1/n_s)$. 
\end{corollary}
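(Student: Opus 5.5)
The plan is to solve the recurrence of Theorem~\ref{thm:1} in closed form and then read both bounds off that closed form. Throughout, $t\ge 1$ (for $t=0$ the upper bound is an equality), and $n_s\ge 2$ for every $s$, so that each factor $1-1/n_s$ lies in $[1/2,1)$.

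\textbf{Closed form.} With $\alpha_t=0$, Theorem~\ref{thm:1} reads $R_t=\frac{n_0R_0}{n_t}+\bigl(1-\frac1{n_t}\bigr)R_{t-1}$. Set $r_t:=R_t/(n_0R_0)$ and $x_s:=1/n_s\in(0,1/2]$. Then
\[
r_t=x_t+(1-x_t)\,r_{t-1},\qquad r_0=x_0 .
\]
Equivalently, $1-r_t=(1-x_t)(1-r_{t-1})$. Iterating this gives the exact formula
\[
r_t=1-(1-x_0)\,w_t,\qquad w_t:=\prod_{u=1}^t(1-x_u).
\]
The key observation is that $1-w_t$ telescopes in increasing order of $u$:
\[
1-w_t=\sum_{u=1}^t x_u\prod_{v=1}^{u-1}(1-x_v).
\]
Hence
\[
r_t=x_0+(1-x_0)\sum_{u=1}^t x_u\prod_{v=1}^{u-1}(1-x_v).
\]

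\textbf{Upper bound.} Every coefficient $(1-x_0)\prod_{v<u}(1-x_v)$ multiplying $x_u$ lies strictly below $1$. Therefore $r_t<\sum_{s\le t}x_s$ whenever $t\ge1$, which is $R_t<n_0R_0\sum_{s\le t}1/n_s$.

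\textbf{Lower bound.} This step needs the ordering above, because the weights must be $2^{-s}$ and not $2^{-(t-s)}$. Use $1-x_0\ge 1/2$ and $\prod_{v=1}^{u-1}(1-x_v)\ge 2^{-(u-1)}$. This bounds the coefficient of $x_u$ below by $2^{-u}$ for $u\ge1$, while the coefficient of $x_0$ is exactly $1=2^{0}$. Thus
\[
r_t\ \ge\ \sum_{s\le t}\frac{1}{2^s n_s},
\]
which is the stated lower bound after multiplying by $n_0R_0$.

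\textbf{Main obstacle: strictness of the lower bound.} The inequality is strict as soon as some $n_v>2$ with $v<t$. In that case $1-x_v>1/2$, so the factor $\prod_{v<u}(1-x_v)$ for any $u>v$ strictly exceeds $2^{-(u-1)}$. In the boundary case $n_s=2$ for all $s$, the closed form gives $r_t=1-2^{-(t+1)}=\sum_{s\le t}2^{-(s+1)}$, so equality holds. I would therefore state the lower bound as $\ge$ in general, and as $>$ under the mild condition that $n_s>2$ for some $s<t$.
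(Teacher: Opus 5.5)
Your proof is correct and is essentially the paper's argument made explicit. The paper rewrites the recursion as $n_t(R_t-R_{t-1})=n_0R_0-R_{t-1}$ and telescopes; your expansion $r_t=x_0+(1-x_0)\sum_{u=1}^{t}x_u\prod_{v=1}^{u-1}(1-x_v)$ is exactly that telescoped sum, with the increment factor $1-r_{u-1}=\prod_{v=0}^{u-1}(1-x_v)\in[2^{-u},1)$ giving both bounds.

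Your caveats are also correct and genuinely needed. The hypotheses $t\ge 1$, $n_s\ge 2$ and (implicitly, since you divide by it) $R_0>0$ are all required; for example, $n_0=1$ gives $R_t=R_0$ for all $t$ and breaks the lower bound. When $n_s=2$ for all $s<t$, the lower bound holds with equality, so the paper's strict lower inequality needs your extra condition that $n_s>2$ for some $s<t$.
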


 Unlike previous situations, there is no hope of improvement in the sense $R_t<R_0$ by growing sample size $n_t$. However, $R_t$ can be made arbitrarily close to $R_0$ (this is intuitively obvious since if we have access to unlimited samples from $P_{\theta_t}$, then $\theta_t$ can be estimated without any statistical error, hence we only incur error at the first round with finite $n_0$ samples from $P^*$).

\subsubsection{Data Accumulation}
\label{sec:accu}
To mitigate model collapse, a line of work~\cite{accumulation,kazdan2024accumulating} suggests training with accumulated data from past stages. Here we study the training evolution with accumulated data in our setting. Using the same notations as before, $Y^{0}$ be the initial data and $\widehat{\theta}^{a}_{0}:=\widehat{\theta}_{0}$ be the initial estimate. For training time $t\geq 1$, the new stage synthetic data $Y^{t}$ of size $n_{t}$, generated from the previous model fit. Then the next stage model is fitted by estimating $\widehat{\theta}^{a}_{t}$ from data accumulated through all previous stages $Y^{0}, \cdots, Y^{t}$. We derive the entire trajectory of risks $R^a_t :=\mathbb{E}\Vert\hat{\theta}^a_t - \theta^*\Vert_2^2$ under accumulated training (Theorem~\ref{thm:acccumulation}, Appendix~\ref{app:1}) and state some key findings below.

\begin{proposition}
\label{porp:accum1}
For fixed number of samples per iteration $n_t=n$, the sequence of $\nbracket{R^a_t }_{t\geq 0}$ is eventually strictly increasing and it's limit, say \(  R_\infty^{a}\) can be bounded both sides as 
\[
  R_0 \cdot \exp\left( -\frac{1}{n} \left( \frac{\pi^2}{6} - 1 \right) \right)   \le R_\infty^{a} \le R_0 \cdot \nbracket{1 + \frac{\pi^2}{6}}.
\]
\end{proposition}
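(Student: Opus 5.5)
The plan is to derive an exact one-step recursion for $R^a_t$, the same way Theorem~\ref{thm:1} is obtained, and then read off monotonicity and both bounds from it. With $n_t=n$ the accumulated estimator is the pooled sample proportion over $(t+1)n$ observations. It can therefore be written as
$$\widehat{\theta}^a_t=\tfrac{t}{t+1}\widehat{\theta}^a_{t-1}+\tfrac{1}{t+1}\widehat{p}_t,$$
where $\widehat{p}_t$ is the vector of sample proportions of the fresh synthetic batch $Y^t\sim \mathrm{Cat}(\widehat{\theta}^a_{t-1})^{\otimes n}$. Let $\mathcal{F}_{t-1}$ denote the history up to stage $t-1$. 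Conditionally on $\mathcal{F}_{t-1}$, the increment $\widehat{p}_t-\widehat{\theta}^a_{t-1}$ has mean zero and conditional second moment $\frac1n\bigl(1-\Vert\widehat{\theta}^a_{t-1}\Vert_2^2\bigr)$. Writing $e_t=\widehat{\theta}^a_t-\theta^*=e_{t-1}+\frac{1}{t+1}(\widehat{p}_t-\widehat{\theta}^a_{t-1})$, the cross term vanishes in expectation, and we get
$$R^a_t=R^a_{t-1}+\frac{1}{n(t+1)^2}\,\mathbb{E}\bigl(1-\Vert\widehat{\theta}^a_{t-1}\Vert_2^2\bigr).$$

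The key step is to close this recursion. By the tower property $\mathbb{E}\widehat{\theta}^a_t=\theta^*$ for all $t$, since the estimator is a martingale started at an unbiased $\widehat{\theta}_0$. Hence $\mathbb{E}\Vert\widehat{\theta}^a_{t-1}\Vert_2^2=\Vert\theta^*\Vert_2^2+R^a_{t-1}$. Using $1-\Vert\theta^*\Vert_2^2=nR_0$ gives the linear recurrence
$$R^a_t=\Bigl(1-\tfrac{1}{n(t+1)^2}\Bigr)R^a_{t-1}+\tfrac{R_0}{(t+1)^2},\qquad R^a_0=R_0.$$
This is presumably what Theorem~\ref{thm:acccumulation} encodes.

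The increment equals $\frac{1}{n(t+1)^2}\bigl(nR_0-R^a_{t-1}\bigr)$. Dropping the negative part gives $R^a_t\le R_0\bigl(1+\sum_{s\ge 1}(s+1)^{-2}\bigr)=R_0\,\pi^2/6$, which is below the stated $R_0(1+\pi^2/6)$. For $n\ge2$ this is also strictly less than $nR_0$. Therefore every increment is strictly positive, so the sequence is strictly increasing, a fortiori eventually so, and bounded; hence $R^a_\infty$ exists and satisfies the upper bound.

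For the lower bound I would iterate the recurrence and discard the nonnegative additive terms, giving $R^a_\infty\ge R_0\prod_{s\ge1}\bigl(1-\frac{1}{n(s+1)^2}\bigr)$. The factors can then be compared to $\exp\bigl(-\frac1n(\pi^2/6-1)\bigr)$, using $\sum_{s\ge1}(s+1)^{-2}=\pi^2/6-1$. In the case $n\ge2$ the bound is in fact immediate from monotonicity, since $R^a_\infty\ge R_0$ dominates the stated exponential bound.

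The main obstacle is the degenerate case $n=1$. There the recurrence gives $R^a_1=R_0$ and the sequence is constant, so strict monotonicity must be excluded, or the statement read with $n\ge2$. The product-to-exponential comparison also needs care, because $1-x\ge e^{-x}$ fails. I would first try to keep the additive terms so that the comparison is not needed. If that fails, I would use $\log(1-x)\ge -x/(1-x)$, which costs only a factor $(1-1/(4n))^{-1}$ in the exponent.
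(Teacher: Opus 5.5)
Your proof is correct, and it takes a different route from the paper's in the steps that carry the weight. The paper's argument for this statement is the one printed under the label of Proposition~\ref{porp:accum2}; the two proofs are swapped. It unrolls the recurrence as $R^a_t=R_0\prod_s a_s+\sum_k b_k\prod_s a_s$ and then argues each claim separately:
\begin{itemize}
\item the lower bound, by bounding the product below by $\exp\left(-\frac1n\sum_{s\ge2}s^{-2}\right)$;
\item the upper bound, by bounding the products by $1$ and summing the $b_k$;
\item existence of the limit, by a Cauchy argument;
\item monotonicity, by the heuristic that the positive $O(t^{-2})$ term ``dominates''.
\end{itemize}

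You instead write the recurrence in increment form, $R^a_t-R^a_{t-1}=\frac{1}{n(t+1)^2}(nR_0-R^a_{t-1})$. Everything then follows from the fact that each increment is at most $R_0/(t+1)^2$:
\begin{itemize}
\item it gives $R^a_t\le R_0\pi^2/6$, which is sharper than the stated constant;
\item hence $R^a_{t-1}<nR_0$ for $n\ge2$ and $R_0>0$, so the sequence is strictly increasing from $t=1$ on;
\item hence the limit exists and satisfies $R^a_\infty\ge R_0$, which already dominates the exponential lower bound.
\end{itemize}

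What your route buys is rigor. The paper's product step uses $1-x\ge e^{-x}$, which is backwards, as you noticed. Its sum step uses $\sum_k (k^2-1/n)^{-1}\le\sum_k k^{-2}$, which is also backwards. Its monotonicity claim needs exactly the inequality $R^a_t<nR_0$ that you establish. Your recurrence is also the correct one. It is what the derivation inside the proof of Theorem~\ref{thm:acccumulation} actually yields, with additive term $n_{t+1}n_0R_0/S_{t+1}^2$. The displayed statement of that theorem instead puts $S_{t+1}^2-n_{t+1}$ in the denominator.

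Your handling of $n=1$ is right, and you can state it more firmly. With $n_0=1$, $\widehat{\theta}_0$ is the point mass at $Y^0_1$, and every later synthetic draw equals $Y^0_1$. So $\widehat{\theta}^a_t=\widehat{\theta}_0$ and $R^a_t\equiv R_0$. The ``strictly increasing'' claim is therefore false for $n=1$, as it is when $R_0=0$, while both bounds hold trivially.

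This means your monotonicity-or-constancy argument already settles the lower bound for every $n$. The fallback via $\log(1-x)\ge -x/(1-x)$ is unnecessary. It would also give only a weaker exponent than the one stated, so drop it rather than present it as a route to the stated bound.
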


\begin{proposition}
\label{porp:accum2}
For sample sizes growing at each iteration with $n_t\uparrow \infty$, the limit \( \lim_{t \to \infty} R^{a}_t = R_\infty^{a} \) exists and \( R_\infty^{a} > R_0 \). Moreover, if \( R_\infty^{a} < n_0 R_0 \), then \( R^{a}_t \) is eventually monotonically increasing.
\end{proposition}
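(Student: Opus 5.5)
The plan is to derive an exact recursion for $R^a_t$ under accumulation, in the same spirit as Theorem~\ref{thm:1}, and then read off both claims from a closed-form product. Write $N_t=\sum_{s\le t}n_s$ and let $m_t$ be the vector of sample proportions of $Y^t$. The accumulated estimator then satisfies
\[
N_t\widehat{\theta}^a_t=N_{t-1}\widehat{\theta}^a_{t-1}+n_t m_t .
\]
Conditionally on the past, $n_t m_t$ is $\text{Multinomial}(n_t,\widehat{\theta}^a_{t-1})$ (purely synthetic data), so $\mathbb{E}[m_t\mid \mathcal{F}_{t-1}]=\widehat{\theta}^a_{t-1}$. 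Hence $(\widehat{\theta}^a_t)$ is a martingale with $\mathbb{E}\widehat{\theta}^a_t=\theta^*$.

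Using orthogonality of martingale increments, I would write
\[
R^a_t=R^a_{t-1}+\frac{n_t^2}{N_t^2}\,\mathbb{E}\,\mathbb{E}\bigl[\|m_t-\widehat{\theta}^a_{t-1}\|_2^2\mid\mathcal{F}_{t-1}\bigr].
\]
The conditional expectation equals $(1-\|\widehat{\theta}^a_{t-1}\|_2^2)/n_t$. By unbiasedness, $\mathbb{E}\|\widehat{\theta}^a_{t-1}\|_2^2=\|\theta^*\|_2^2+R^a_{t-1}$, and $1-\|\theta^*\|_2^2=n_0R_0$. This gives the key recursion
\[
R^a_t=R^a_{t-1}+\frac{n_t}{N_t^2}\bigl(n_0R_0-R^a_{t-1}\bigr).
\]
Set $D_t:=n_0R_0-R^a_t$. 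The recursion becomes $D_t=D_{t-1}\bigl(1-n_t/N_t^2\bigr)$, and $D_0=(n_0-1)R_0>0$ whenever $n_0\ge 2$ and $\theta^*$ is non-degenerate. Therefore
\[
R^a_t=n_0R_0-(n_0-1)R_0\prod_{s=1}^t\Bigl(1-\frac{n_s}{N_s^2}\Bigr).
\]
This is the trajectory result I would record as Theorem~\ref{thm:acccumulation}.

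Each factor lies in $(0,1)$ because $n_s<N_s\le N_s^2$. So the partial products are strictly decreasing and positive, which means the limit $R^a_\infty$ exists and $R^a_t$ is strictly increasing for every $t$. In particular, $R^a_\infty\ge R^a_1>R_0$, since the first factor $1-n_1/N_1^2$ is strictly less than $1$. This proves the existence of the limit and $R^a_\infty>R_0$.

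For the monotonicity claim, observe that $R^a_\infty<n_0R_0$ is equivalent to $\prod_{s\ge1}(1-n_s/N_s^2)>0$. Under this condition $D_t>0$ for all $t$, so each increment $\frac{n_t}{N_t^2}D_{t-1}$ is strictly positive and $R^a_t$ is (eventually, indeed always) strictly increasing. I would also remark that the hypothesis is automatic: $n_s/N_s^2\le n_s/(N_{s-1}N_s)=1/N_{s-1}-1/N_s$ telescopes, so $\sum_s n_s/N_s^2\le 1/n_0<\infty$ and the product stays positive. The assumption $n_t\uparrow\infty$ is not really needed beyond keeping the setup well defined.

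The main obstacle is the variance computation in the recursion step. There I must carefully justify the identity $\mathbb{E}(1-\|\widehat{\theta}^a_{t-1}\|_2^2)=n_0R_0-R^a_{t-1}$, which relies on the martingale/unbiasedness property. I must also check that the cross term between $\widehat{\theta}^a_{t-1}-\theta^*$ and $m_t-\widehat{\theta}^a_{t-1}$ vanishes after conditioning. Everything else is bookkeeping on the product.
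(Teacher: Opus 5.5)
Your argument is correct, and after the one-step recursion it takes a genuinely different route from the paper. The recursion is obtained the same way in both: your martingale-orthogonality step is the paper's coordinatewise law of total variance, summed over $k$.

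The paper then never solves the recursion. It gets existence of $R^a_\infty$ from summable increments, $|R^a_{t+1}-R^a_t|\le C/t^2$, which is where $n_t\uparrow\infty$ enters. It then gets $R^a_\infty>R_0$ and eventual monotonicity from a case split on whether $R^a_\infty<n_0R_0$, reading off the sign of the increment $\frac{n_{t+1}}{S_{t+1}^2}\bigl(n_0R_0-R^a_t+o(1)\bigr)$ for large $t$.

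You instead center at the exact fixed point $n_0R_0$ and obtain the product formula. That gives strictly more:
\begin{itemize}
\item monotonicity at every step, not just eventually;
\item no growth condition on $n_t$;
\item via your telescoping bound $\sum_{s\ge1}n_s/N_s^2\le 1/n_0$, the fact that the hypothesis $R^a_\infty<n_0R_0$ holds automatically in the nondegenerate case;
\item an explicit bound $R^a_\infty\le(2-1/n_0)R_0$, using $\prod_s(1-x_s)\ge 1-\sum_s x_s$.
\end{itemize}
The paper's sign-and-summability argument is coarser, but it is insensitive to $o(1)$ perturbations of the recurrence.

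That robustness matters here. The recurrence displayed in Theorem~\ref{thm:acccumulation} has additive term $\frac{n_{t+1}n_0R_0}{S_{t+1}^2-n_{t+1}}$, which has no constant fixed point. The variance computation in its own proof (and yours) gives $\frac{n_{t+1}n_0R_0}{S_{t+1}^2}$ instead. Yours is the correct one: for $n_0=n_1=2$ a direct computation gives $R^a_1=\tfrac{9}{8}R_0$, which matches your formula but not the displayed one. So derive the recursion yourself, as you plan, rather than quoting that statement.

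Also keep your hypothesis $n_0\ge 2$, $\|\theta^*\|_2<1$ explicit, because it is genuinely needed. For $n_0=1$, $\widehat{\theta}_0$ is a vertex of the simplex and every synthetic draw reproduces it. Then $R^a_t\equiv R_0$, so $R^a_\infty>R_0$ fails. The paper's justification (``the additive term is strictly positive'') misses this case.
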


Our findings aligns with the results in the previous works on accumulation \cite{accumulation,kazdan2024accumulating}. Proposition~\ref{porp:accum1} for fixed sample sizes $n_t=n$, gives a stronger upper-bound on the limiting risk than what we would obtain by standard training. This demonstrates the benefit of accumulation. However, also note (from Proposition~\ref{porp:accum2}) for sample sizes growing at each iteration, there is no extra benefit from accumulation and $R_t$ is eventually increasing. Added with the increasing memory requirement for the accumulation process, it seems that standard retraining should be preferred.


\section{Importance of fresh real data to improve iterative training}
\label{sec:one step}
Results from previous sections show (i) with fresh real data at each stage, improvement is always possible and consistency also holds in many cases, (ii) with no fresh real data after $D_0$, it is not possible (even with accumulation) to do better than the initial fit. In this section, we argue that this is a more general phenomenon. Consider a \textit{single step} of the general iterative training process described in Section~\ref{sec:framework}, with model class $\cP=\{P_{\theta}:\theta\in\Theta\}$. Let $\hat{\theta}_0$ be an estimator trained using data $D_0$ and $\hat{\theta}_1$ be the estimator based on $D_1$ in the following iteration. Our aim is to understand, under what conditions can $\hat{\theta}_1$ be statistically better than $\hat{\theta}_0$. The following two results respectively deal with the necessity and sufficiency of fresh samples from the true data distribution to improve estimation. Denote $\cR(\theta, P):=\bbE[\ell(\theta;Z)]$ where $\ell$ is some loss function, and the expectation is taken with respect to $Z\sim P$ and any randomness in $\theta$ (if estimator). Typical examples with $Z=(X,Y)$ include the regression setting with $\ell(\theta;Z)=(Y-X^\top \theta)^2$ (linear regression), $\ell(\theta,Z)=(Y-X^\top \theta)^2+l\norm{\theta}_1$ (LASSO regression), classification setting with $\ell(\theta,Z)=-Y\log(\sigma(X^\top\theta))-(1-Y)\log(1-\sigma(X^\top\theta))$ (for logistic regression), $\ell(\theta,Z)=\max\{0, 1-YX^\top\theta\}$ (for linear SVM) and with $Z=X$, the density estimation setting with $\ell(\theta,X)=-\log p_{\theta}(X)$. 



\begin{theorem}\label{thm:one_step1}
    If $D_1|D_0$ is independent of $P^*$ and known, then for any convex loss $\ell(\theta;X)$ (convex in $\theta$), for any estimator $\hat{\theta}_1$ based on $D_1$, there exists $\hat{\theta}_0$ based on $D_0$, such that $\cR(\hat{\theta}_0,P^*) \leq \cR(\hat{\theta}_1,P^*)$.
\end{theorem}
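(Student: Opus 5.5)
This is a Rao–Blackwellization argument: since $D_1$ depends on $P^*$ only through $D_0$, any estimator built from $D_1$ can be replaced by its conditional mean given $D_0$, and convexity makes that replacement no worse. Concretely, the hypothesis says the conditional law $Q(\cdot\mid D_0)$ of $D_1$ given $D_0$ does not involve $P^*$ and is known. Given an estimator $\hat{\theta}_1=\hat{\theta}_1(D_1)$, I will define
\[
\hat{\theta}_0(D_0) := \mathbb{E}\bigl[\hat{\theta}_1(D_1)\mid D_0\bigr]=\int \hat{\theta}_1(d_1)\,Q(\mathrm{d}d_1\mid D_0).
\]
This is a genuine estimator based on $D_0$. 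It requires no knowledge of $P^*$, because $Q$ is known and free of $P^*$; the conditional expectation can be computed, or simulated, from $D_0$ alone. I will need $\Theta$ to be convex so that $\hat{\theta}_0$ lies in $\Theta$, and $\hat{\theta}_1$ to be integrable so the expectation exists. I will state both as implicit regularity assumptions.

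The comparison of risks proceeds in three steps.

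\begin{itemize}
\item Fix an independent test point $Z\sim P^*$, independent of $(D_0,D_1)$. By the tower property,
\[
\cR(\hat{\theta}_1,P^*)=\mathbb{E}\bigl[\mathbb{E}[\ell(\hat{\theta}_1;Z)\mid D_0,Z]\bigr].
\]
\item Conditionally on $(D_0,Z)$, the map $\theta\mapsto\ell(\theta;Z)$ is convex. The conditional law of $\hat{\theta}_1$ given $(D_0,Z)$ equals its law given $D_0$, since $Z$ is independent of $(D_0,D_1)$. Conditional Jensen then gives
\[
\mathbb{E}[\ell(\hat{\theta}_1;Z)\mid D_0,Z]\ge \ell\bigl(\mathbb{E}[\hat{\theta}_1\mid D_0];Z\bigr)=\ell(\hat{\theta}_0;Z).
\]
\item Taking expectations of both sides yields $\cR(\hat{\theta}_0,P^*)\le\cR(\hat{\theta}_1,P^*)$.
\end{itemize}

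The main obstacle is not the inequality but justifying that $\hat{\theta}_0$ is a legitimate $D_0$-measurable estimator that does not depend on $P^*$. This is exactly where the hypothesis enters: if $D_1\mid D_0$ involved $P^*$, as when $\alpha_1>0$, the conditional expectation would depend on the unknown truth, and the argument would fail. That failure is the contrast with the sufficiency result that follows.

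Two technical points also need care. First, conditional Jensen needs $\ell(\cdot;Z)$ to be measurable, convex, and bounded below (or integrable) so that the conditional expectations are well defined. Second, when $\Theta$ is infinite-dimensional or not convex, one can instead work with the mixture over the conditional law of $\hat{\theta}_1$, i.e.\ a randomized estimator built from $D_0$ and auxiliary randomness. That version matches $\cR(\hat{\theta}_1,P^*)$ exactly even without convexity, and convexity is then only needed to derandomize. I will present the Jensen version and mention the randomized version as a remark.
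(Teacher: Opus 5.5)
Your proposal is correct and follows the paper's proof: it defines $\hat{\theta}_0=\mathbb{E}[\hat{\theta}_1\mid D_0]$, applies Jensen's inequality to the convex loss, and takes expectations. Your version is more careful than the paper's one-line argument, since you condition on $(D_0,Z)$ with $Z$ an independent test point and you state the implicit convexity of $\Theta$ and the integrability conditions, but the idea is the same.
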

The result shows that if $D_1$ has no additional information about $P^*$ other than that already in $D_0$, then for any estimator based on $D_1$, one can construct an estimator based only on $D_0$ which outperforms the former. It is important that the distribution of $D_1|D_0$ is known, otherwise although $\hat{\theta}_0=\bbE[\hat{\theta}_1|D_0]$ has better risk, it is not a computable statistic from the data. As an example, suppose given $D_0$, an adversary chooses a distribution for $D_1$ without any knowledge of $P^*$ -- in this case, the learner without access to this extra information cannot compute $\bbE[\hat{\theta}_1|D_0]$. Secondly, 
the result does not say that given an estimation method (e.g. empirical risk minimization), using it on $D_0$ is as good as using it on $D_1$. In the mixture setting $P_1=\alpha P^* + (1-\alpha) \hat{P}_0$, where $\hat{P}_0$ is a function of $D_0$; hence for $\alpha=0$, $D_1|D_0$ is independent of $P^*$. It is interesting to note that the result applies even to the case of accumulation, since $D_1|D_0$ is still independent of $P^*$ (reusing `real' data from $D_0$ is very different from having `fresh real' data).


We next present a partial converse to the above result, demonstrating that availability of fresh information injected into the training actually helps the statistical estimation. In this, we consider the mixture model setting discussed in the previous section. In this case, since we only consider a single-step of the overall iterative evolution, consider an arbitrary $\hat{\theta}_0$. In the next step, under the mixture model, the underlying population distribution is $Q_1 = \alpha P^* + (1-\alpha) P_{\hat{\theta}_0}$ -- see Figure \ref{fig:likelihood} for a visualization. Consider the negative log likelihood loss $\ell(\theta,X)=-\log p_{\theta}(X)$. Minimizing the risk wrt this loss is equivalent to MLE, which at the population level is equivalent to the KL-projection of $Q_1$ on the model space $\cP$. The next result shows that at the population level, the risk can be improved if $\alpha>0$.

\begin{theorem}\label{thm:one_step2}
    For $\ell(\theta,X)=-\log p_{\theta}(X)$, given an arbitrary $\theta_0\in\Theta$, $Q_1=\alpha P^* + (1-\alpha)P_{\theta_0}$ and $\theta_1=\argmin_{\theta} \cR(\theta, Q_1)$, we have $\cR(\theta_1,P^*) \leq \cR(\theta_0, P^*)$ if $\alpha>0$.
\end{theorem}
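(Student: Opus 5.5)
The plan is to exploit the linearity of the population risk in the data distribution, combined with the optimality of $\theta_1$ for $Q_1$. Since $\cR(\theta,Q)=\bbE_{X\sim Q}[-\log p_\theta(X)]$ is linear in $Q$, the mixture structure gives
\begin{equation*}
\cR(\theta,Q_1)=\alpha\,\cR(\theta,P^*)+(1-\alpha)\,\cR(\theta,P_{\theta_0})
\end{equation*}
for every $\theta\in\Theta$. I will assume the relevant expectations are finite, so that this identity and the comparisons below make sense.

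The first step is to use the definition of $\theta_1$ as a minimizer of $\cR(\cdot,Q_1)$, which gives $\cR(\theta_1,Q_1)\le \cR(\theta_0,Q_1)$. Expanding both sides with the decomposition above yields
\begin{equation*}
\alpha\,\cR(\theta_1,P^*)+(1-\alpha)\,\cR(\theta_1,P_{\theta_0})\le \alpha\,\cR(\theta_0,P^*)+(1-\alpha)\,\cR(\theta_0,P_{\theta_0}).
\end{equation*}

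The second step handles the synthetic component. By Gibbs' inequality, $\cR(\theta,P_{\theta_0})-\cR(\theta_0,P_{\theta_0})=\mathrm{KL}(P_{\theta_0}\,\|\,P_\theta)\ge 0$ for every $\theta$. In words, $\theta_0$ minimizes the cross-entropy risk under its own model distribution. Applying this with $\theta=\theta_1$ gives $(1-\alpha)\cR(\theta_1,P_{\theta_0})\ge(1-\alpha)\cR(\theta_0,P_{\theta_0})$ whenever $\alpha\le 1$.

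The final step combines the two. Subtracting the synthetic terms from the inequality in the first step leaves $\alpha\,\cR(\theta_1,P^*)\le\alpha\,\cR(\theta_0,P^*)$, and dividing by $\alpha>0$ gives the claim. The case $\alpha=1$ is immediate, since then $\theta_1$ minimizes $\cR(\cdot,P^*)$ directly.

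I expect the main obstacle to be technical rather than conceptual. First, the minimizer $\theta_1$ must exist. Second, the risks must be finite so that the subtraction is legitimate; in particular, $\cR(\theta_0,P_{\theta_0})$ (the entropy of $P_{\theta_0}$) and $\cR(\theta_0,P^*)$ must be finite. If $\cR(\theta_0,P^*)=\infty$, the inequality holds trivially. Otherwise I would state finiteness of $\cR(\theta_0,P_{\theta_0})$ as an implicit assumption, under which the argument goes through. The argument also shows strict improvement whenever $\mathrm{KL}(P_{\theta_0}\|P_{\theta_1})>0$ or $\theta_0$ fails to minimize $\cR(\cdot,Q_1)$.
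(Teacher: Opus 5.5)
Your proof is correct and follows the paper's argument: expand $\cR(\theta_1,Q_1)\le\cR(\theta_0,Q_1)$ by linearity in the mixture, use Gibbs' inequality $\cR(\theta_0,P_{\theta_0})\le\cR(\theta_1,P_{\theta_0})$ to cancel the synthetic component, and divide by $\alpha>0$. One small correction to your closing aside: strict improvement from $\KL(P_{\theta_0}\Vert P_{\theta_1})>0$ alone requires $\alpha<1$, since that term enters with weight $1-\alpha$.
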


It is worth emphasizing that the result is at a population level -- for sample-based estimators, the statistical error due to finite sample size also enters the picture. However, it shows that with sufficiently high sample size $n$, the estimator has better risk. Lastly, If $\cP$ is convex and $\cR(\theta,P^*)$ is strictly convex in a neighborhood $\cU\subset\Theta$ of $\theta^*:=\argmin_{\theta} \cR(\theta, P^*)$ and $\theta_0\in \cU$, then the inequality is strict (see Appendix \ref{app:one-step} for details).


\section{Numerical experiments}
\label{sec:simulation}
We complement our analysis with several numerical experiments. The purpose of the experiments is to understand the iterative evolution of the estimation performance $R_t$ for different types of models under different rates for the proportion of ground truth $\alpha_t$, and sample size $n_t$. We present the results for the multinomial model, which supports our results and also shed light on related heuristics. Due to space constraints, we postpone additional results for this model, along with results for the Gaussian model, Gaussian mixture model and logistic regression model to the Appendix~\ref{app: additional experiments}. The results hint at the wider applicability of the results presented in this work. Finally, we present a numerical experiment using the nano-GPT model, simulating a real-life evolution of training under synthetic data.

\begin{figure}
    \centering
    \includegraphics[width=\linewidth]{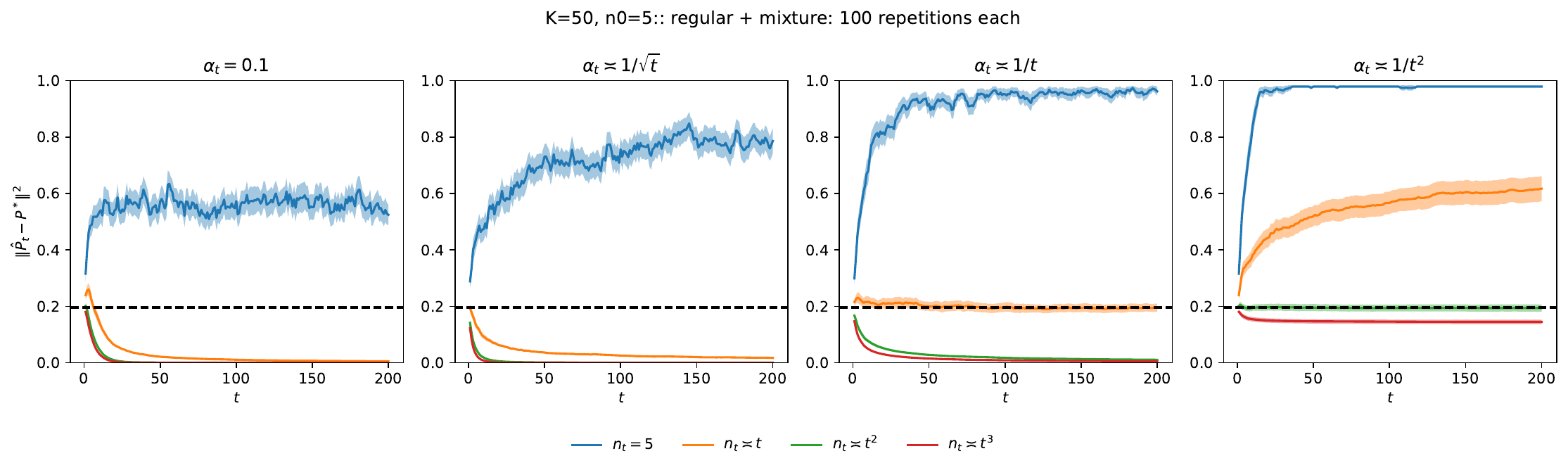}
    \includegraphics[width=\linewidth]{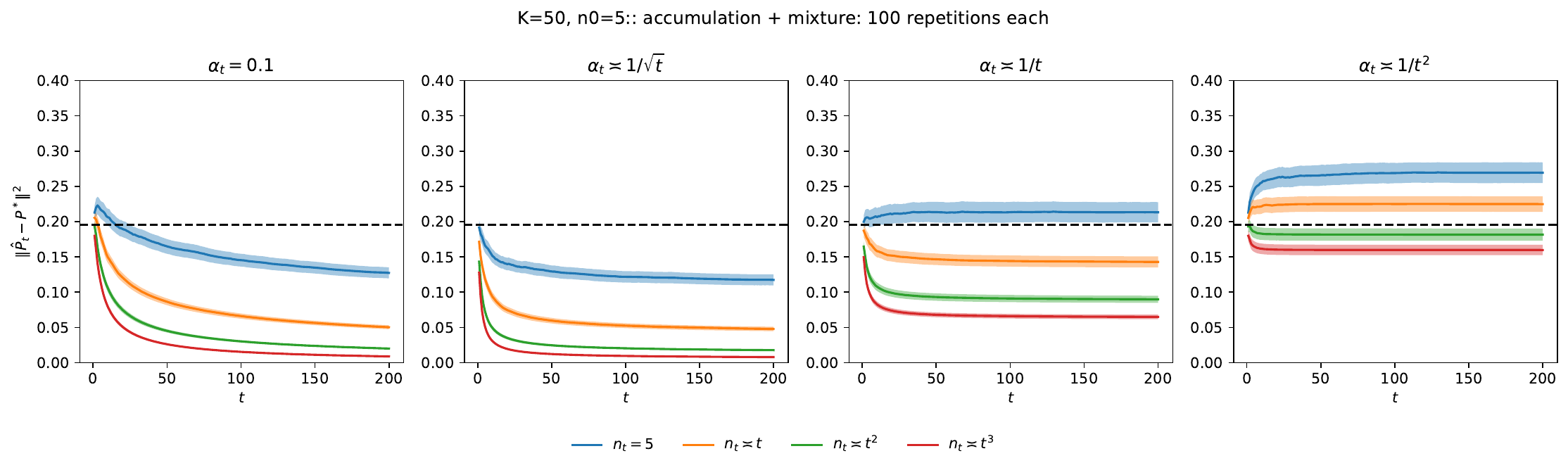}
    \caption{Iterative evolution of estimation quality for multinomial model under data arising from a mixture of a fixed ground-truth distribution and synthetic data generated from the trained model at previous iteration under various settings for $\alpha_t$ and $n_t$ -- top row shows the results without accumulation, bottom row shows results with accumulation, The black dashed horizontal line is the value of $R_0$}
    \label{fig:multinomial experiments}
\end{figure}
\subsection{Multinomial model}

We study the case of iterative training of multinomial models when the training data at each iteration is corrupted by synthetic samples generated from the trained model at the previous iteration. We choose $K=50$ and perform experiments under different combinations of the proportion of ground-truth $\alpha_t$ and sample sizes $n_t$. In particular, we consider four types of $\alpha_t$ -- (i) fixed $\alpha_t=0.1$, (ii) $\alpha_t\asymp 1/\sqrt{t}$, (ii) $\alpha_t\asymp 1/t$ and (iv) $\alpha_t\asymp 1/t^2$. For each of these, we consider four cases for $n_t$ -- (i) fixed $n_t=5$, (ii) linearly growing $n_t\asymp t$, (iii) quadratically growing $n_t\asymp t^2$ and (iv) $n_t\asymp t^3$. From experience, it is hard to include faster decay rates of $\alpha_t$ or growth-rates for $n_t$, because of numerical issues. For each of these cases, we iteratively train the model and compute the distance of the estimate of the parameter $\hat{p}\in\Delta^{K-1}$ to the fixed ground-truth parameter $p^*\in\Delta^{K-1}$, for $t=1,\dots,T=200$. Each experiment is repeated 100 times and the mean is shown, along with a confidence band for the mean. For all experiments, the data at iteration $t$ was drawn from the mixture model $\alpha_t p^* + (1-\alpha_t)\hat{p}_{t-1}$ and the MLE estimate was used (without any additional mixture-based modification). Figure \ref{fig:multinomial experiments} shows the results of our experiments -- plotting the evolution of $R_t$ over iterations $t$, based on 100 repetitions of the experiment for each setting (the mean is shown as the line and the band shows the confidence interval for the mean). The top row shows experiments without any accumulation of training data, and the bottom row shows results with accumulation of training data, that is, at iteration $t$, the MLE was computed based on \textit{all} data from iterations $s=1,2,\dots,t$. Figure \ref{fig:multinomial experiments} firstly shows that with $\alpha_t$ fixed or decaying slowly, it is possible to recover the true parameter, by training the model on corrupted data in an agnostic fashion. Secondly, for rapidly decaying $\alpha_t$, although we do not see consistency, for fast $n_t\asymp t^3$, we see that $R_{\infty}<R_0$, demonstrating that iterative training actually improves estimation. Finally, a direct comparison between training with and without accumulation shows that it improves the performance when $n_t$ is fixed, but for increasing $n_t$ this is not exactly true -- for example, for $\alpha_t\asymp 1/t$, neither choice of $n_t\asymp t^2$ or $t^3$ leads to consistency.

\begin{figure}
    \centering
    \includegraphics[width=0.5\linewidth]{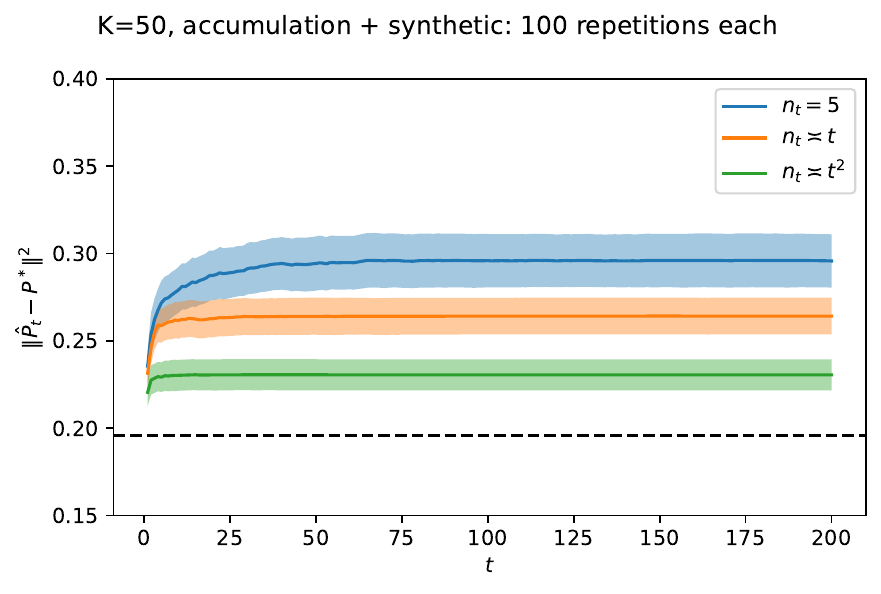}
    \caption{Iterative evolution of estimation quality for multinomial model under purely synthetic regime, when models are trained on accumulated data.}
    \label{fig:simulation accumulation}
\end{figure}

We also perform a simulation study to understand the effect of accumulation under training with purely synthetic data regime (corresponding to $\alpha_t=0$). The result is shown in Figure \ref{fig:simulation accumulation}, for different choices of $n_t$. Note that $n_t$ is the number of synthetic samples drawn from the trained model at the previous iteration -- with accumulation, the total amount of data on which the model at iteration $t$ is trained on is $\sum_{s\leq t} n_s$. The results match our findings in Proposition \ref{porp:accum2}, we see that $R_t$ increases with $t$, till it stabilizes. A faster rate of growth for $n_t$ simply reduces this limit, but the performance is always worse compared to $R_0$. This can be seen as a specific case of our general results in Section \ref{sec:one step}, illustrating that unless fresh information from the ground-truth distribution is available, the estimation quality will only worsen over iterations.

\subsection{GPT-2 learning on real text data}
In this section, we explore a more complex transformer-based language model and demonstrate the iterative evolution of it, being trained under different settings. For these experiments, we use the Wikipedia movie plots dataset from Kaggle\footnote{\texttt{www.kaggle.com/datasets/jrobischon/\\wikipedia-movie-plots}}. It consists of movie plots (we call these documents) of around 34000 movies. For each experiment, we hold out 500 documents and all training is done based on observations from the remaining documents and synthetically generated documents (exact details given in the settings below). For each setting, we do recursive training for $T$ rounds, evaluating the fitted model based on perplexity on a held-out data corpus. 

 We employ a moderate-sized GPT-2 language model using the nanopgt implementation by Andrej Karpathy \citep{Karpathy2022} (additional details provided in Appendix \ref{app: gpt2 details}).  The following four settings are considered (with $n_0=1000$):
\begin{enumerate}
    \item  \textit{Setting 0 (Only synthetic samples):} Use $n_0$ documents from the dataset (fresh data) to train the model. Subsequently, generate $n_t= n_0 t$ synthetic documents (each with around 150 words) from the last fitted model and use that to train the next model.
    \item \textit{Setting 1 (Accumulation):} Use $n_0$ documents from the dataset to train the model. Subsequently, generate $n_0$ synthetic documents from the last-fitted model and accumulate with all previous training data.
    \item \textit{Setting 2 (Fixed $\alpha$)} Use $n_0$ documents from the dataset to train the model. Subsequently, generate $n_t/4$ synthetic documents from last-fitted model and add $3n_t/4$ fresh documents from the real data corpus (unused previously), to get total sample size $n_t=1000+200t$ at time $t$. Thus, at each iteration, we have a fixed $\alpha_t=0.25$.
    \item \textit{Setting 3 (Decaying $\alpha$):} Use $n_0$ documents from dataset (fresh data) to train the model. Subsequently, at $t=1,2,\dots$ generate $200 t$ synthetic documents from last-fitted model and add $n_0$ \textit{fresh} documents from the real data corpus (unused previously). Thus, at iteration $t$, we have proportion of fresh data $\alpha_t=1000/n_t$, which is decreasing since $n_t=1000+200t$.
\end{enumerate}

\begin{figure}
    \centering
    \includegraphics[width=0.48\linewidth]{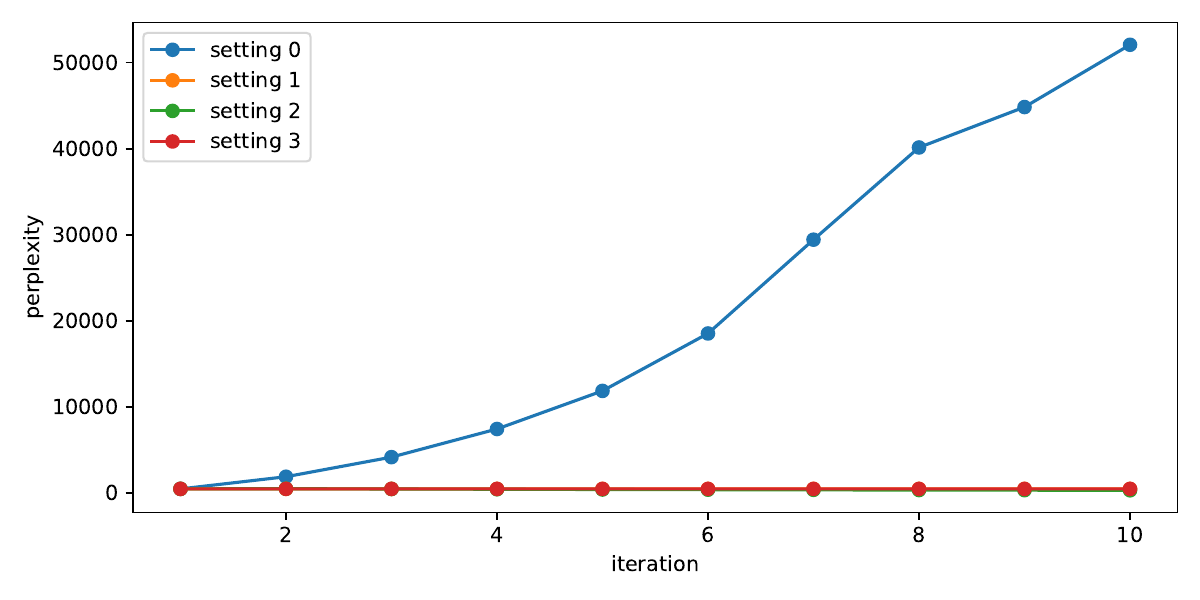}
    \includegraphics[width=0.48\linewidth]{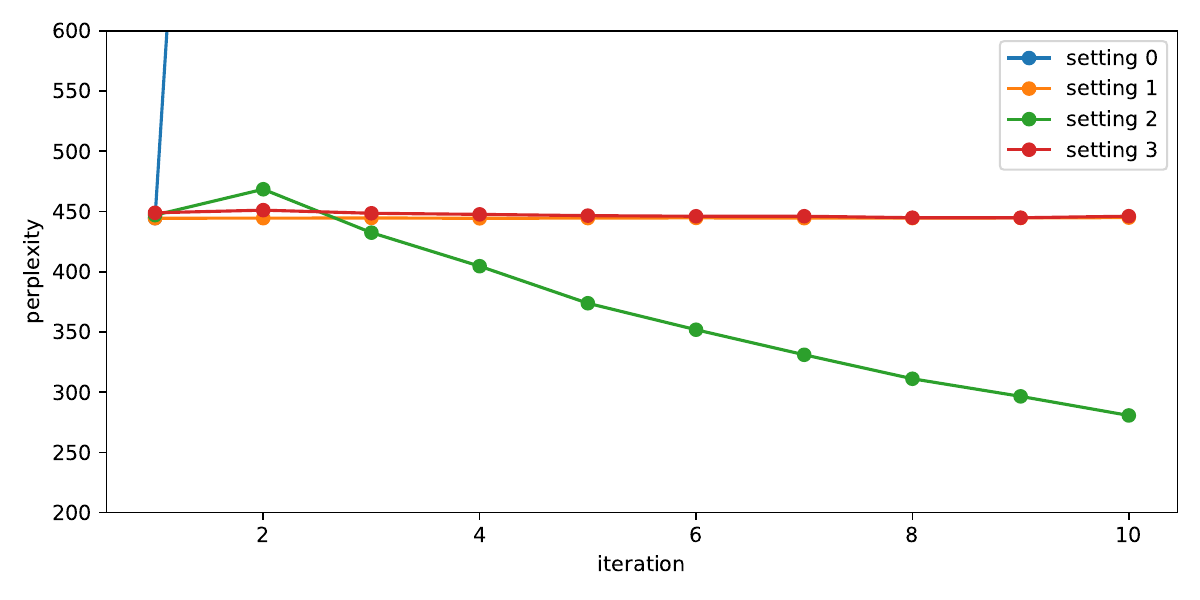}
    \caption{Iterative evolution of GPT2-like language model evaluated based on perplexity on held-out data corpus in 4 different settings. The two plots are of the same 4 settings (blue: setting 0, orange: setting 1, green: setting 2 and red: setting3), just different scales in the y-axis.}
    \label{fig:simulation_gpt}
\end{figure}

The findings of our simulation studies are given in Figure \ref{fig:simulation_gpt} -- each colored plot indicates a particular setting, and we plot the average perplexity over the 10 random repetitions of the experiment. Similar to our findings from the previous section (and the results in Appendix~\ref{app: additional experiments}), setting 0 leads to catastrophic collapse, as can be seen by the increase in the perplexity. The other settings can be seen more clearly in the right plot in Figure \ref{fig:simulation_gpt}. For both settings 1 and 3, we see that the model iteratively does not improve; however, it does not deteriorate too much. Lastly, we see that in setting 3, the model actually improves over iterations, shown by the decrease in perplexity. 

\section{Conclusion \& Future Directions}
\label{sec:dis}

We investigate model collapse in generative models and identify conditions under which statistical estimation can improve during iterative training with mixed real and synthetic data. In a simplified next-token prediction setting, we precisely characterize error evolution and show that consistency and improved estimation are achievable under suitable data mixing. Simulations suggest these findings generalize beyond the toy setting. Crucially, we show that without continual infusion of real data, performance cannot improve, making retraining ineffective. Our analysis highlights how the balance between real data fraction and sample size governs long-term performance, offering testable predictions for practical applications.

While sample sizes can often be tracked, estimating the mixture weights remains a challenge. Leveraging recent advances in synthetic data detection, e.g., \cite{radvand2025zeroshotstatisticaltestsllmgenerated}), a crucial future direction lies in estimating mixture proportions. Using which one can obtain corrected estimates $\widehat{\theta}_{t+1}^{c}=\et+ {\watt}^{-1}(\ett-\et)$, potentially improving estimation. Although we do not directly control the proportion of synthetic data at each iteration, it is reasonable to expect lower synthetic contamination when the model is less accurate. In practice, e.g., training language models on web-scale corpora--implicit human or algorithmic filtering naturally suppressing poor generations. This implies that the mixture weight \(1 - \alpha_t\), reflecting the prior model’s contribution, decreases with increasing risk \(R_{t-1}\). Such \emph{naturally adaptive control} reduces the influence of weak models, preventing collapse and keeping the limiting risk \((R_\infty)\) near or below the initial risk \(R_0\), without requiring assumptions on \(n_t\) (see Appendix~\ref{app:adaptive}).

While our framework models fresh information via a mixture distribution, there are other ways to get new information—most notably, reinforcement learning from human feedback (RLHF) to fine-tune LLMs or curate synthetic data~\cite{feng2024beyondcollapses, ferbach2024selfconsuminggenerativemodelscurated}. Our iterative setup also connects to information passing scenarios involving limited fresh input, e.g., communicating LLMs or echo chambers \cite{cinelli2021echo}. In  future, it would be interesting to extend our theoretical framework to study a broader range of interactive training evolutions. Finally, assuming a fixed true data distribution may be unrealistic in evolving language environments. As AI-generated content fills up online text sources, the output from LLMs might also influence the language model of humans, i.e., the true data distribution. This feedback loop suggests a co-evolution of human and machine language, raising important questions about long-term model dynamics.


\section*{Impact Statement}
\textit{``This paper presents work whose goal is to advance the field of Machine Learning and AI. There are many potential societal consequences of our work, none which we feel must be
specifically highlighted here.''}

\bibliography{ref}
\bibliographystyle{icml2026}

\newpage
\appendix
\onecolumn

\section{Proofs \& Additional Results in Section \ref{sec:language model}}
\label{app:1}

\begin{proof}[Proof of Theorem~\ref{thm:1}]
For $t\geq0$, firstly $Y^{t} = \nbracket{Y^{t}_{1}, \ldots, Y^{t}_{n_{t}}}$ is $\iid$ data from $\text{Cat}(\theta_{t})$ is used to make estimates $$\widehat{\theta}_{t}(k) = \frac{\sum_{j=1}^{n_{t}}\mathds{1}[Y^{t}_{j}=k]}{n_{t}}.$$ In first stage, $n_{0}\widehat{\theta}_{0}(k) \sim \text{Binom}(n_{0},\theta^{*}(k))$ thus $\bbE{\widehat{\theta}_{0}(k)} = \theta^{*}(k)$ and $\Var \widehat{\theta}_{0}(k) =  \frac{\theta^{*}(k) (1-\theta^{*}(k))}{n_{0}}$. Now for $t\geq 0$, $$\widehat{\theta}_{t+1}(k) = \frac{\sum_{j=1}^{n_{t+1}}\mathds{1}[Y^{t+1}_{j}=k]}{n_{t+1}},$$ and  $\mathds{1}[Y^{t+1}_{j}=k] \mid \widehat{\theta}_{t} \sim \text{Ber}(\att \theta^{*}(k) + (1-\att) \widehat{\theta}_{t}(k)$), thus $$\widehat{\theta}_{t+1}(k)\mid \widehat{\theta}_{t}  \sim \frac{\text{Binom}(n_{t+1},\att \theta^{*}(k) + (1-\att) \widehat{\theta}_{t}(k) )}{n_{t+1}}.$$ Then \begin{align*}
    &\bbE{\theta}_{t+1}\mid \widehat{\theta}_{t} = \widehat{\theta}_{t} \\
    \implies & \bbE{\theta}_{t+1} = \bbE{\theta}_{t} \cdots =  \bbE{\theta}_{0}\\
    \implies & \bbE{\theta}_{t} = \theta^{*}.
\end{align*} 
Now using the total law of variance, we note that the variance of estimate $\widehat{\theta}_{t+1}(k)$ satisfys the following recursion,
\begin{align*}
    &\Var \widehat{\theta}_{t+1}(k)= \bbE{ \Var \widehat{\theta}_{t+1}(k)\mid \widehat{\theta}_{t} } + \Var \bbE{\widehat{\theta}_{t+1}(k)\mid \widehat{\theta}_{t} } \\
    &=\frac{1}{n_{t+1}}\nbracket{\theta^{*}(k)(1- \theta^{*}(k)) - (1-\att)^{2} \Var \widehat{\theta}_{t}(k)}\\
    & + (1-\att)^{2} \Var \widehat{\theta}_{t}(k)\\
    &= \frac{\theta^{*}(k)(1- \theta^{*}(k))}{n_{t+1}}+ \frac{n_{t+1}-1}{n_{t+1}}(1-\att)^{2}\Var \widehat{\theta}_{t}(k).
\end{align*}Moreover, \begin{align*}
    R_t &= \mathbb{E}\Vert\hat{\theta}_t - \theta^*\Vert_2^2 \\
    &= \sum_{k=1}^{K}\bbE\rbracket{\nbracket{\hat{\theta}_{t}(k) - \theta^{*}(k)}^2}\\
    &= \sum_{k=1}^{K}\Var \widehat{\theta}_{t}(k).
\end{align*}Hence summing the recursion for $\Var \widehat{\theta}_{t+1}(k)$ over $k\in [K]$, we get the desired recursion for $R_t$.  
\end{proof}


\begin{proof}[Proof of Corollary~\ref{cor:1}]
For Case 1 with fixed number of samples $n_{t} = n$ and $\at=\alpha$, the recursion from Theorem~\ref{thm:1} simply becomes $$R_{t} = a+bR_{t-1},$$  with $$a=R_{0} \text{ and } b=\frac{n-1}{n}(1-\alpha)^{2}<1.$$  Now solving the linear recursion \begin{align*}
R_{t}-l &= b(R_{t-1}-l) \\
l - b\lambda &= a \implies l = \frac{a}{1-b}\\
R_{t} &= l + b^{t}\nbracket{R_{0}-l} \\ \\
\implies R_{t}&\to l \text{ as } t\to \infty \\
\text{ with } l = &  \ R_{0}\left(1 - \frac{n-1}{n}(1-\alpha)^{2}\right)^{-1}. 
\end{align*} Now notice if $\alpha_t > \alpha >0$ for all $t\geq 1$, $R_{t}$ would decay faster than what it would with fixed $\alpha_t=\alpha$. Hence the above limit for $R_t$ with constant $\alpha_t = \alpha$ would be a limiting upper-bound  hold for any sequence of $\nbracket{\alpha_{t}}_{t\geq 1}$ if $\alpha_t > \alpha$.

For Case 2 with varying sample size $(n_t)_{t\geq1}$, note that
$$R_t \leq \frac{c_0}{n_t} + \left(1- \frac{1}{n_t}\right)\epsilon R_{t-1}$$
where $c_0=n_0R_0$ and $\epsilon=(1-\alpha)^2 < 1$. The above gives
$$R_t < \frac{c_0}{n_t} + \epsilon R_{t-1}$$
which recursively gives
\begin{align*}
    R_t &< \frac{c_0}{n_t} + \epsilon \frac{c_0}{n_{t-1}} + \epsilon^2 \frac{c_0}{n_{t-2}} + \dots + \epsilon^{t-1}\frac{c_0}{n_1} + \epsilon^t R_0 \\
    &= c_0 \sum_{s=1}^t \frac{\epsilon^{t-s}}{n_s} + \epsilon^t R_0.
\end{align*}
Thus, if $n_t\uparrow \infty$, then $R_t\to 0$ as $t\to\infty$.

Details: Consider the series $S_t=\sum_{s=1}^{t} \frac{\epsilon^{t-s}}{n_s}$. Fix $\delta>0$. Show that for large $t$, $S_t<\delta$. Towards this, note that $S_t = \sum_{s=0}^{T-1} \frac{\epsilon^{s}}{n_{t-s}} + \sum_{s=T}^{t-1} \frac{\epsilon^{s}}{n_{t-s}}$. The first part is upper bounded by $T/n_{t-T}$. The second part is upper bounded by $\epsilon^T/(1-\epsilon)n_1$, giving an overall upper bound of $S_t<T/n_{t-T} + \epsilon^T/(1-\epsilon)n_1$ for any choice of $T<t$. Choose $T(t)$ as a function of $t$, such that $T(t)\uparrow \infty$ slower than $n_t$. Then, for large enough $t$, we get $S_t<\delta$.

\begin{align*}
    R_{t} > \frac{n_{0}}{n_{t}}R_{0} + \frac{1}{2}R_{t-1} 
\end{align*} so for large enough $t$ as $n_t\uparrow \infty$ we can find a constant $\beta < 1$ such that \begin{align*}
    R_{t} &< \beta R_{t-1} \implies  R_{t} = \beta^{t} R_{0} \to 0 \text{ as } t \to \infty.\\
\text{Also, }  & \frac{n_{0}}{n_{t}}R_{0} + \frac{n_{t}-1}{n_{t}}\epsilon R_{t-1} < \beta R_{t-1}\\
    \frac{n_{0}}{n_{t}}R_{0} &< \rbracket{\beta n_{t} - \epsilon (n_{t}-1)}R_{t-1} \\
    &= \rbracket{\epsilon + n_{t}(\beta -\epsilon)}R_{t-1} \\
\end{align*} where $\epsilon = (1-\alpha)^{2}$ and $\beta = \frac{1+\epsilon}{2} \in (\epsilon,1)$.
\end{proof}

\begin{proof}[Proof of Corollary~ \ref{cor:2}]
If we have $\alpha_t=0$ the recursion from Theorem~\ref{thm:1} simplifies and then further telescoping gives
\begin{align*}
n_{t}(R_{t}-R_{t-1}) &=  n_{0}R_{0} - R_{t-1}\\
\implies  n_{0}R_{0}\sum_{s\leq t} (1/2^s n_s)&<R_t < n_{0}R_{0}\sum_{s\leq t} (1/n_s).
\end{align*} 
\end{proof}

\begin{proof}[Proof of Proposition~\ref{prop:decay}]
\emph{Part 1: } 
Define $\xi_t >0$ such that  $$\xi_t = 1-(1-\alpha_t)^2, \ \forall t \geq 1$$Note that $\xi_t=2\alpha_t - \alpha_t^2$ decays at the same rate as $\alpha_t$. Let's begin with the general recurrence relation of $R_t$. To simplify notations replace $(1-\alpha_t)^2$ by $1-\xi_t$ and then divide both sides by $R_{t-1}$.
 \begin{align*}
      R_t &= \frac{n_{0}R_{0}}{n_t} + \frac{n_t-1}{n_t} (1-\alpha_t)^2 R_{t-1} \\
     \implies  R_t  &= \frac{n_{0}R_{0}}{n_t} + \frac{(n_t-1)(1-\xi_{t})}{n_t}R_{t-1} \\
     \implies  \frac{R_t}{R_{t-1}}   &= \frac{n_{0}R_{0}}{n_t}\frac{1}{R_{t-1}}+ \frac{(n_t-1)(1-\xi_{t})}{n_t} \\  
    \implies  \frac{R_t}{R_{t-1}}   &\leq \frac{n_{0}R_{0}}{n_t}\frac{n_{t-1}}{n_{0}R_{0}}+ \frac{(n_t-1)(1-\xi_{t})}{n_t} \\
    &= \frac{n_{t-1}+n_{t}-\xi_{t}(n_t-1)-1}{n_t} =: \eta_{t}.
\end{align*} The inequality used above is because $\forall t\geq0$,  $$R_t = \frac{n_{0}R_{0}}{n_t} + \frac{n_t-1}{n_t} (1-\alpha_t)^2 R_{t-1} \implies R_{t}\geq \frac{n_{0}R_{0}}{n_{t}}.$$
 Observe $\eta_{t}>0$ and \begin{align*}
    \eta_{t}<1 &\iff n_{t-1}+n_{t}-\xi_{t}(n_t-1)-1 < n_t \\
    &\iff n_t > \frac{n_{t-1}-1}{\xi_t}+1,
\end{align*} given $\xi_{t}$ we recursively construct $n_t$ to satisfy the above inequality. Now, telescoping products directly gives \begin{align*}
  \frac{R_t}{R_{t-1}} &\leq \frac{n_{t-1}+n_{t}-\xi_{t}(n_t-1)-1}{n_t} =: \eta_{t} \\
 \implies R_{t} \leq& \eta_{t}R_{t-1} 
   \implies R_{t} < R_{t-1}.
\end{align*}Thus growing the number of samples $n_t$ satisfies the above recursive bound for all $t\geq 1$, then $\nbracket{R_t}_{t\geq 0}$ is a decreasing sequence. Now as $R_t$ is also lower-bounded by 0, we can conclude $$\lim_{t\to\infty}R_t<R_{0},$$ due to the monotone convergence theorem.

\emph{Part 2:}     Let us denote
    \begin{align*}
        a_t &:= \frac{n_0}{n_t} \to 0 \\
        b_t &:= \frac{n_{t}-1}{n_t}(1-\alpha_t)^2 \in (0,1).
    \end{align*}
    The recurrence takes the standard non-homogeneous linear form
    $$R_t = a_t R_0 + b_t R_{t-1},$$
    which can be unfolded to give
    $$R_t = R_0 \underbrace{\sum_{k=0}^t a_{t-k}\prod_{j=t-k+1}^t b_j}_{s_t} + R_0 \underbrace{\prod_{j=1}^t b_j}_{p_t}.$$
    We focus on the two terms individually and show that they go to 0.

    Let us focus on the second product term $p_t$. Note that
    \begin{align*}
    p_t &= \prod_{j=1}^t b_j = \left(\prod_{j=1}^t \frac{n_{j}-1}{n_j}\right)\left(\prod_{j=1}^t (1-\alpha_j)^2\right) \\&= \prod_{j=1}^t \left(1 - \frac{1}{n_j}\right)\times \prod_{j=1}^t (1-\alpha_j)^2.
    \end{align*}
    Both the products are bounded by 1 trivially. The first product goes to 0 if $\sum_t 1/n_t = \infty$, since $\log (1-1/n_t) \leq -1/n_t$. The second product goes to 0 if $\sum_t \alpha_t = \infty$, since $\log (1-\alpha_j)^2 \leq -2\alpha_j$ for $\alpha_j\in (0,1)$. Thus the term $p_t$ goes to 0 if
    $$\sum_t 1/n_t = \infty \quad \text{OR} \quad \sum_t \alpha_t = \infty.$$

    Now, consider the first term, which is the sum $s_t$. Substituting $m=t-k$, we have
    \begin{align*}
        s_t &= \sum_{m=0}^t a_m \prod_{j=m+1}^t b_j \\
        &= \sum_{m=0}^t \frac{n_0}{n_m}\prod_{j=m+1}^t \left(1 - \frac{1}{n_j}\right)(1-\alpha_j)^2.
    \end{align*}
    Using $\log(1-x)\leq -x$ for $x\in(0,1)$, we obtain
    \begin{align*}
        \log (b_i) &\leq -\frac{1}{n_i} - 2\alpha_i \\
        \Rightarrow \prod_{j=m+1}^t b_j &\leq \exp\left(-\sum_{j=m+1}^t \left(\frac{1}{n_j}+2\alpha_j\right)\right)\\
        &\leq \exp\left(-2\sum_{j=m+1}^t \alpha_j\right).
    \end{align*}
    Thus, (ignoring the $n_0$ fixed term)
    $$s_t \leq \sum_{m=1}^t \frac{1}{n_m}\exp\left(-2\sum_{j=m+1}^t \alpha_j\right)=:\bar{s}_t.$$

    Now, fix $\epsilon>0$. We show that $\bar{s}_t<\epsilon$ for sufficiently large $t$. For this, we split the sum into two parts, for some large constant $J>0$ (independent of $t$) to be specified later.
    \begin{align*}
        \bar{s}_t &= \sum_{j=1}^{J} \frac{1}{n_j}\exp\left(-2\sum_{i=j+1}^t \alpha_i\right) + \\
        &\sum_{j=J+1}^t\frac{1}{n_j}\exp\left(-2\sum_{i=j+1}^t \alpha_i\right) =: I_1 + I_2.
    \end{align*}
    For $I_1$, this has fixed number of $J$ terms. For each $j\in[1,J]$, the part in the exponential $\sum_{i=j+1}^t\alpha_i\to \infty$ as $t\to\infty$ with $j$ fixed. Thus, there is large enough $T$ such that for $t\geq T$, $I_1\leq \epsilon/2$.

    Now, the constant $J$ is chosen in a way such that
    \begin{align*}
        &\forall j>J, \sum_{i=j+1}^\infty \alpha_i >A, \text{ for some fixed large } A.
    \end{align*}
    This is possible as $\sum_t \alpha_t =\infty$. The choice of $A$ is such that 
    $$\exp(-2A)C<\epsilon/2\iff A > \frac{1}{2}\log(2C/\epsilon)$$
    where $C = \sum_{t=1}^\infty 1/n_t$. Thus,
    \begin{align*}
        \sum_{j=J+1}^t \frac{1}{n_j}\exp\left(-2\sum_{i=j+1}^t \alpha_i\right) &\leq \exp(-2A) \sum_{j=J+1}^\infty \frac{1}{n_j}\\
        &<\epsilon/2
    \end{align*}
    Thus, gives $\bar{s}_t\leq \epsilon$. Since $\epsilon$ was arbitrary, we conclude $s_t\leq \bar{s}_t \to 0$ as $t\to\infty$. Thus, $R_t = p_t+s_t\to 0$.
\end{proof}

\begin{theorem}
\label{thm:acccumulation}
In the above setting of iterative training with accumulation, $R^{a}_t:=\mathbb{E}\Vert\hat{\theta}^{a}_t - \theta^*\Vert_2^2$ measuring the estimation quality satisfies the following recurrence 
\begin{align}
\label{rec: accu}
R^{a}_{t+1} = \frac{\nbracket{n_{0}+\cdots+n_{t+1}}^{2}-n_{t+1}}{\nbracket{n_{0}+\cdots+n_{t+1}}^{2}}R^{a}_{t} \\+ \frac{n_{t+1}n_{0}R_{0}}{\nbracket{n_{0}+\cdots+n_{t+1}}^{2}-n_{t+1}}    
\end{align} where  $R^{a}_{0}=R_0=\frac{1-\Vert\theta^*\Vert_2^2}{n_{0}}$. 
\end{theorem}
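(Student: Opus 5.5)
The plan is to mirror the proof of Theorem~\ref{thm:1}: work coordinatewise, use the tower property for the mean, and use the law of total variance for the second moment. Write $N_t:=n_0+\cdots+n_t$. Under accumulation the estimator is the pooled sample proportion,
\[
\widehat{\theta}^{a}_{t+1}(k)=\frac{N_t\,\widehat{\theta}^{a}_{t}(k)+\sum_{j=1}^{n_{t+1}}\mathds{1}[Y^{t+1}_j=k]}{N_{t+1}} .
\]
Conditionally on the past, the new block $Y^{t+1}$ is i.i.d.\ $\text{Cat}(\widehat{\theta}^{a}_{t})$. Hence $\sum_j \mathds{1}[Y^{t+1}_j=k]\mid\widehat{\theta}^{a}_{t}\sim\text{Binom}(n_{t+1},\widehat{\theta}^{a}_{t}(k))$.

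\textbf{Step 1 (unbiasedness).} Since $\bbE[\widehat{\theta}^{a}_{t+1}\mid \widehat{\theta}^{a}_{t}]=\widehat{\theta}^{a}_{t}$, the process is a martingale. Iterating gives $\bbE\widehat{\theta}^{a}_{t}=\theta^*$, so $R^a_t=\sum_k\Var\,\widehat{\theta}^{a}_t(k)$, exactly as in Theorem~\ref{thm:1}.

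\textbf{Step 2 (total variance).} The conditional mean equals $\widehat{\theta}^{a}_{t}(k)$, so it contributes $\Var\,\widehat{\theta}^{a}_t(k)$. The conditional variance is $n_{t+1}\widehat{\theta}^{a}_t(k)(1-\widehat{\theta}^{a}_t(k))/N_{t+1}^2$. For its expectation I use unbiasedness:
\[
\bbE[\widehat{\theta}^{a}_t(k)(1-\widehat{\theta}^{a}_t(k))]=\theta^*(k)(1-\theta^*(k))-\Var\,\widehat{\theta}^{a}_t(k).
\]
Summing over $k$ and using $\sum_k\theta^*(k)(1-\theta^*(k))=n_0R_0$ should give a one-step linear relation of the form
\[
R^a_{t+1}=\Bigl(1-\tfrac{n_{t+1}}{N_{t+1}^2}\Bigr)R^a_t+(\text{inhomogeneous term proportional to } n_{t+1}n_0R_0).
\]
The homogeneous coefficient $(N_{t+1}^2-n_{t+1})/N_{t+1}^2$ then matches the statement.

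\textbf{Step 3 (normalizing the inhomogeneous term).} The remaining task is to put the forcing term in the stated form $n_{t+1}n_0R_0/(N_{t+1}^2-n_{t+1})$. I would do this by carefully tracking the normalization used for the pooled variance in the accumulated setting, and then check the $t=0$ case directly against $R^a_0=R_0$.

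\textbf{Main obstacle.} This last bookkeeping step is where I expect the difficulty. The naive total-variance computation in Step 2 produces the denominator $N_{t+1}^2$ in the forcing term. Reconciling that with the stated $N_{t+1}^2-n_{t+1}$ requires pinning down exactly how the paper's accumulated estimator and variance are normalized. I would first verify the identity numerically for $t=0,1$ with small $n_t$ before committing to the algebra. If the two forms genuinely disagree, the correct conclusion of this plan is the Step 2 recurrence with denominator $N_{t+1}^2$, and the stated form would then need its forcing-term denominator corrected.
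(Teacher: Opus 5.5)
Your Steps 1 and 2 are exactly the paper's proof: the martingale property gives $\bbE\widehat{\theta}^{a}_t=\theta^*$, then you apply the law of total variance coordinatewise and sum over $k$. This correctly yields
\[
R^a_{t+1}=\Bigl(1-\frac{n_{t+1}}{N_{t+1}^2}\Bigr)R^a_t+\frac{n_{t+1}\,n_0R_0}{N_{t+1}^2},
\]
which is the convex combination $(1-c)R^a_t+c\,n_0R_0$ with $c=n_{t+1}/N_{t+1}^2$.

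Step 3 cannot be carried out, and you should commit to your fallback rather than hedge. The error is in the statement's forcing term, not in your bookkeeping. The paper's own proof ends at precisely your coordinatewise identity,
\[
\Var\,\widehat{\theta}^a_{t+1}(k)=\frac{n_{t+1}}{N_{t+1}^2}\bigl[\theta^*(k)(1-\theta^*(k))-\Var\,\widehat{\theta}^a_t(k)\bigr]+\Var\,\widehat{\theta}^a_t(k).
\]
It then asserts that summing over $k$ gives the stated recursion. Summing actually gives the $N_{t+1}^2$ denominator.

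The small-case check you proposed settles it. Take $n_0=n_1=1$. Then $\widehat{\theta}_0$ is a point mass, so the single synthetic draw copies the real observation, giving $\widehat{\theta}^a_1=\widehat{\theta}_0$ and $R^a_1=R_0$. Your recurrence gives $\tfrac34R_0+\tfrac14R_0=R_0$, while the stated one gives $\tfrac34R_0+\tfrac13R_0=\tfrac{13}{12}R_0$. Similarly, $n_0=2$, $n_1=1$ gives $R^a_1=\tfrac{10}{9}R_0$ by direct computation; this matches yours and not the stated value $\tfrac{41}{36}R_0$.

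So the correct theorem has $N_{t+1}^2$ in both denominators, and your argument proves that version.
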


\begin{proof}[Proof of Theorem~\ref{thm:acccumulation}]
Firstly the estimates from accumulated data will have the form \begin{align*}
        \ett(k) &= \frac{\sum_{i=0}^{t+1}\sum_{j=1}^{n_{i}}\mathds{1}\rbracket{Y^{i}_{j}=k}}{n_{0}+\cdots+n_{t+1}}  
    \end{align*}which can be written as 
\begin{align*}
   & \frac{\sum_{j=1}^{n_{t+1}}\mathds{1}\rbracket{Y^{t+1}_{j}=k}+\sum_{i=0}^{t}\sum_{j=1}^{n_{i}}\mathds{1}\rbracket{Y^{i}_{j}=k}}{n_{0}+\cdots+n_{t+1}}\\
    =& \frac{\sum_{j=1}^{n_{t+1}}\mathds{1}\rbracket{Y^{t+1}_{j}=k}+(n_{0}+\cdots+n_{t})\et(k)}{n_{0}+\cdots+n_{t+1}}\\
    \implies &\bbE{\widehat{\theta}_{t+1}(k)\mid \widehat{\theta}_{t}}=  \frac{n_{t+1}\et(k) + (n_{0}+\cdots+n_{t})\et(k)}{n_{0}+\cdots+n_{t+1}}\\& \quad \quad \quad \quad \ \ \ \ \ \ \ \ \ \ \ \ = \et(k).
\end{align*}
Given $\et$, $\mathds{1}\rbracket{Y^{t+1}_{j}=k}$ is $\text{Ber}(\et(k))$
In first stage, $\widehat{\theta}_{0}(k) \sim \frac{\text{Binom}(n_{0},\theta^{*}(k))}{n_{0}}$ thus $\bbE{\widehat{\theta}_{0}(k)} = \theta^{*}(k)$, $\Var \widehat{\theta}_{0}(k) =  \frac{\theta^{*} (1-\theta^{*})}{n_{0}}$. 
Then \begin{align*}
    &\bbE{\theta}_{t+1}\mid \widehat{\theta}_{t} = \widehat{\theta}_{t} \\
    \implies & \bbE{\theta}_{t+1} = \bbE{\theta}_{t} \cdots =  \bbE{\theta}_{0}\\
    \implies & \bbE{\theta}_{t} = \theta^{*}.
\end{align*}
Now using the total law of variance $\Var \widehat{\theta}_{t+1}(k)$ is
\begin{align*}
    &= \bbE{ \Var \widehat{\theta}_{t+1}(k)\mid \widehat{\theta}_{t} } + \Var \bbE{\widehat{\theta}_{t+1}(k)\mid \widehat{\theta}_{t} } \\
    &=\frac{n_{t+1}\bbE{\et(k)(1-\et(k))}}{\nbracket{n_{0}+\cdots+n_{t+1}}^{2}} + \Var \widehat{\theta}_{t}(k)\\
    &=\frac{n_{t+1}}{\nbracket{n_{0}+\cdots+n_{t+1}}^{2}} \rbracket{\theta^{*}(k)(1-\theta^{*}(k))- \Var \widehat{\theta}_{t}(k)}\\ &+ \Var \widehat{\theta}_{t}(k).
\end{align*}Hence summing both sides over $k\in [K]$, we get the desired recursion for $R_t = \sum_{k=1}^{K}\Var \widehat{\theta}_{t}(k)$.
\end{proof}

\begin{proof}[Proof of Proposition~\ref{porp:accum1}]
We proceed in three steps.

\emph{Part 1: (Limit exists \& \( R_\infty > R_0 \)) } 

From~\eqref{eq:recurrence},
\begin{align*}
|R^{a}_{t+1} - R^{a}_t| &= \left| \frac{n_{t+1} n_0 R_0}{S_{t+1}^2 - n_{t+1}} - \frac{n_{t+1}}{S_{t+1}^2} R^{a}_t \right| \\
&\le \frac{n_{t+1} n_0 R_0}{S_{t+1}^2 - n_{t+1}} + \frac{n_{t+1}}{S_{t+1}^2} R^{a}_t \\
\end{align*}
Since $n_t\uparrow\infty$, note \( S_{t+1} \to \infty \) and \( n_{t+1} = o(S_{t+1}^2) \), it follows that
\[
|R^{a}_{t+1} - R^{a}_t| \le \frac{C}{t^2} \quad \text{for large } t.
\]
Hence \( \sum_t |R^{a}_{t+1} - R^{a}_t| < \infty \), so \( R^{a}_t \) is Cauchy and converges:
\[ \lim_{t \to \infty} R^{a}_t = R_\infty. \]

Now to show \( R_\infty > R_0 \), we split into two cases:
\underline{Case 1:} \( R_\infty < n_0 R_0 \). Then for large enough $T$, for all \( t \ge T \),
\[ R^{a}_t < n_0 R_0 - \delta \quad \text{for some } \delta > 0. \]

We now approximate the recurrence:
\begin{align*}
R^{a}_{t+1} - R^{a}_t &= \frac{n_{t+1}}{S_{t+1}^2} \left( \frac{n_0 R_0}{1 - \frac{n_{t+1}}{S_{t+1}^2}} - R^{a}_t \right) \\
&= \frac{n_{t+1}}{S_{t+1}^2} (n_0 R_0 - R^{a}_t + o(1)) > \frac{n_{t+1}}{S_{t+1}^2} \cdot \frac{\delta}{2} > 0.
\end{align*}
So \( R^{a}_t \) is strictly increasing eventually. Moreover, \( R^{a}_t > R_0 \) for all \( t \ge 1 \) since the additive term in \eqref{eq:recurrence} is strictly positive. Therefore:
\[ R_\infty > R_0. \]

\underline{Case 2:} if \( R_\infty > n_0 R_0 \), then clearly \( R_\infty > R_0 \), as \( n_0 \ge 1 \). Thus, in both cases:
\[ R_\infty > R_0 \]

\emph{Part 3: (Eventual monotonicity.)} 

In Case 1, we showed that if \( R_\infty < n_0 R_0 \), then \( R^{a}_t \) is eventually strictly increasing. If instead \( R^{a}_t \ge n_0 R_0 \) eventually, then it's trivially non-decreasing. Hence, in either case, \( R^a_t \) is eventually monotone and converges to a limit \( > R_0 \).
\end{proof}

\begin{proof}[Proof of Proposition~\ref{porp:accum2}]
The recurrence in~\eqref{rec: accu} can be re-written as 
\begin{equation}\label{eq:recurrence}
R^{a}_{t+1} - R^{a}_t =\frac{n_{t+1} n_0 R_0}{S_{t+1}^2 - n_{t+1}} - \frac{n_{t+1}}{S_{t+1}^2} R^{a}_t,
\end{equation}
where \( S_{t+1} := n_0 + n_1 + \cdots + n_{t+1} \). 
Let us define the coefficients:
\[
a_t := 1 - \frac{1}{n(t+2)^2}, \quad b_t := \frac{n}{n(t+1)^2 - 1} R_0.
\]
Then \( R^{a}_{t+1} = a_t R^{a}_t + b_t \). Since \( a_t \to 1 \) and \( b_t = \mathcal{O}(1/t^2) \), the additive increments vanish sufficiently fast, and the sequence \( R^{a}_t \) is Cauchy. Hence, the limit exists.

We can write:
\[
R^{a}_t = R_0 \prod_{s=0}^{t-1} a_s + \sum_{k=0}^{t-1} b_k \prod_{s=k+1}^{t-1} a_s.
\]

\emph{(Lower-bound)}
The product satisfies:
\begin{align*}
 &\prod_{s=0}^\infty a_s = \prod_{s=0}^\infty \left(1 - \frac{1}{n(s+2)^2}\right) \\
 \ge & \exp\left( -\frac{1}{n} \sum_{s=2}^\infty \frac{1}{s^2} \right) = \exp\left( -\frac{1}{n} \left( \frac{\pi^2}{6} - 1 \right) \right).
\end{align*}

Thus:
\[
R_\infty \ge R_0 \cdot \exp\left( -\frac{1}{n} \left( \frac{\pi^2}{6} - 1 \right) \right) > 0.
\]

\emph{(Upper-bound)}
From the closed-form expression
\[
R^a_t = R_0 \prod_{s=0}^{t-1} a_s + \sum_{k=0}^{t-1} b_k \prod_{s=k+1}^{t-1} a_s,
\]
we observe that \( a_s < 1 \), so all products \( \prod_{s=\cdot}^{\cdot} a_s \le 1 \). Using \( b_k = \frac{n}{n(k+1)^2 - 1} R_0 \le \frac{R_0}{(k+1)^2 - \frac{1}{n}} \), we get
\[
R^a_\infty \le R_0 + R_0 \sum_{k=1}^{\infty} \frac{1}{k^2 - \frac{1}{n}} = R_0 \cdot U_n.\]
Since \( \sum_{k=1}^\infty \frac{1}{k^2 - \frac{1}{n}} \le \sum_{k=1}^\infty \frac{1}{k^2} = \frac{\pi^2}{6} \), we obtain the claimed upper bound.

\emph{(Monotonicity)}
Finally, we examine the difference:
\[
R^{a}_{t+1} - R^{a}_t = -\frac{1}{n(t+2)^2} R^{a}_t + \frac{n}{n(t+1)^2 - 1} R_0.
\]
This difference is positive for large \( t \), since the second term dominates the first due to \( R^{a}_t \) being bounded and both terms decaying like \( \mathcal{O}(1/t^2) \). Hence, \( R^{a}_t \) is eventually strictly increasing.
\end{proof}

\section{Proofs \& Results in Section \ref{sec:one step}}
\label{app:one-step}
\begin{figure}
    \centering
    \includegraphics[trim={10cm 8cm 10cm 10cm}, clip, width=0.9\linewidth]{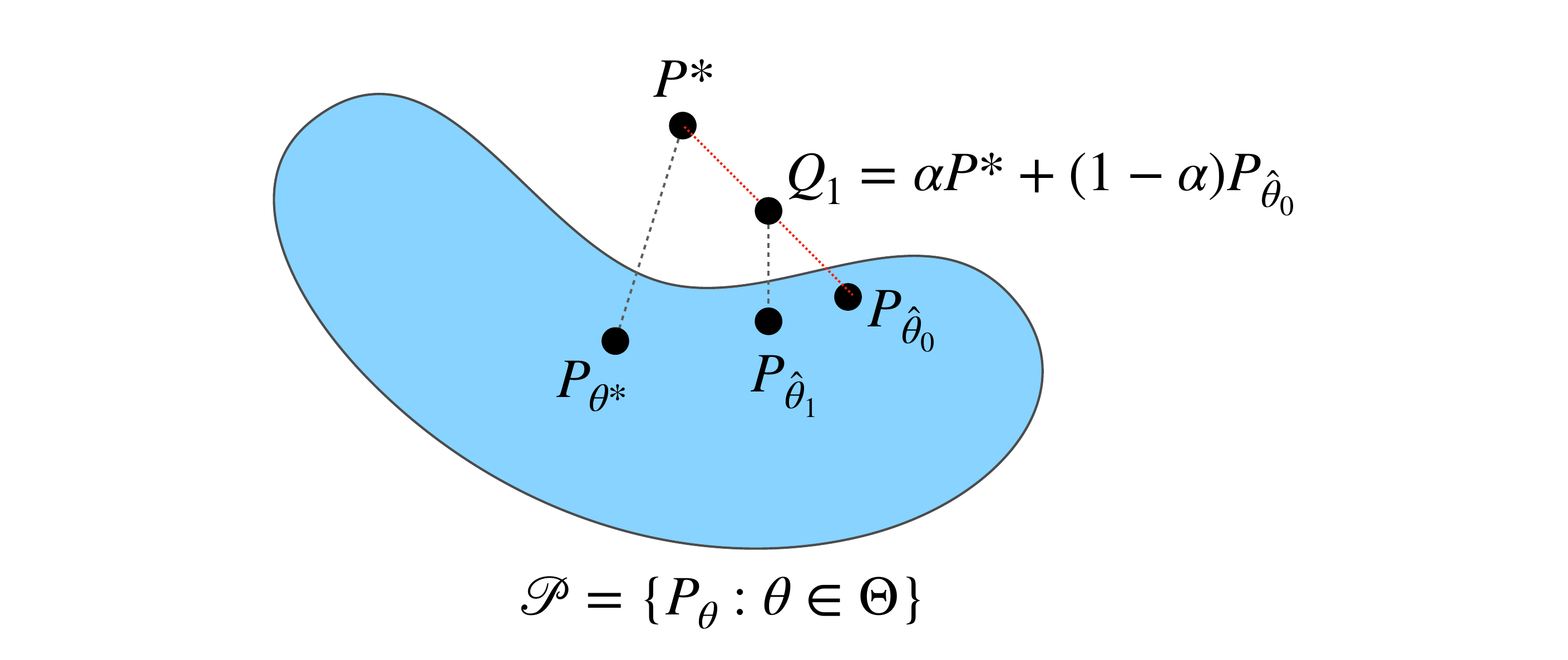}
    \caption{Visual Depiction of Theorem \ref{thm:one_step2}: Fresh information can improve estimation in the context of using MLE given a model class $\cP$, where the MLE can be seen as estimating the KL-projection of the $Q_1$ on the model space. $P_{\theta^*}$ is the projection of the true data distribution $P^*$.}
    \label{fig:likelihood}
\end{figure}

\subsection{An illustrative example}\label{app:examples-one_step}
We first illustrate an example, which captures the main intuition of the two theorems in the section. Consider a simple setting with $\cP=\{N(\mu, 1): \mu\in \bbR\}$ with negative log-likelihood loss. From an empirical risk minimization perspective, this is equivalent to a loss $\ell(\theta,X)=(\theta-X)^2$ -- however, it is better to think about it from a distribution estimation framework since typically the estimator must result in a generative model, to be used in the next stage. Given $X_1,\dots,X_n$ at first stage with sample mean $\tilde{\theta}_0=\bar{X}$, let $Y_1,\dots,Y_m\sim P_{\tilde{\theta}_0}$. With accumulation, the dataset is $D_1$ consisting of all $X$ and $Y$. The optimal estimator based on $D_1$, under the assumption that the data fully comes from a single unknown normal distribution, is $\hat{\theta}_1:=(n\bar{X}+m\bar{Y})/(n+m)$. Note that $D_1|D_0$ is free of $P^*$. Further,
$$\hat{\theta}_0 := \bbE[\hat{\theta}_1|D_0] = \frac{n}{n+m}\bar{X} + \frac{m}{n+m}\hat{\theta}_0 = \bar{X}$$
and the result says that $\hat{\theta}_0=\bar{X}$ (which in this case is the same as $\tilde{\theta}_0$) is as good as $\hat{\theta}_1$. We further note that both estimators are unbiased, but variance of $\hat{\theta}_0$ is lower than that of $\hat{\theta}_1$ -- thereby, $\hat{\theta}_0$ (based only on $D_0$) is a better estimator with respect to this loss. In other words, the additional next stage $D_1$ does not help in any way. Interestingly, the sample size $m$ does not matter.

On the other hand, say we start with $\theta_0\neq \theta^*$. Population distribution at next stage under a mixture setting is $Q_1=\alpha N(\mu^*,1) + (1-\alpha) N(\theta_0, 1)$ -- note that this is not the same as $N(\alpha\mu^*+(1-\alpha)\theta_0, 1)$, in fact, $Q_1\notin \cP$. However, under the negative log-likelihood loss, the KL projection of $Q_1$ on the model space is $N(\alpha\mu^*+(1-\alpha)\theta_0, 1)$. Thus, $\theta_1 = \alpha\mu^*+(1-\alpha)\theta_0$. If $\alpha>0$, $\theta_1$ has a strictly better risk than $\theta_0$ with respect to the true distribution $P^*=N(\mu^*, 1)$. At a sample level, say $\hat{\theta}_1^m$ is based on a finite sample of size $m$ from $Q_1$. Then, we know that $\hat{\theta}_1^m\to \theta_1$ as $m\to\infty$, thereby, with sufficiently large $m$, the risk with $\hat{\theta}_1^m$ is better than $\theta_0$. The precise size depends on $\alpha$ as well.

Lastly, we give an example, where without the condition $\cP$ being convex, one might not improve for low $\alpha$. Consider $\cP=\{N(0,1)\} \cup \{N(1,1)\}$, and the true distribution be $N(0,1)$. If $\theta_0=1$ (i.e., starting at $N(1,1)$), then at next stage $Q_1=\alpha N(0,1)+(1-\alpha)N(1,1)$, whose projection is still $N(1,1)$ if 
$$|(1-\alpha) - 1| < |(1-\alpha) - 0|\iff \alpha < 1/2.$$
Thus, in this case, with low $\alpha$ ($\alpha<1/2$), the estimator is stuck at the sub-optimal $\theta_0$. 

\begin{proof}[Proof of Theorem \ref{thm:one_step1}]
    Let $\hat{\theta}_1$ be an estimator based on $D_1$. In that case, since $D_1|D_0$ is independent of $P^*$, $\bbE[\hat{\theta}_1|D_0]$ is a statistic (free of $P^*$), that is a measureable function of $D_0$ and since $D_1|D_0$ is known, this can be computed. Let $\hat{\theta}_0=\bbE[\hat{\theta}_1|D_0]$. Then by Jensen's inequality, it follows that
    $$\ell(\hat{\theta}_0, X) = \ell\left(\bbE[\hat{\theta}_1|D_0], X\right)\leq \bbE [\ell(\hat{\theta}_1, X) | D_0].$$
    Taking expectation on both sides give the desired inequality. 
\end{proof}

\begin{proof}[Proof of Theorem \ref{thm:one_step2}]
    Firstly, note that $\cR(\theta, P_{\theta'}) = \bbE_{X\sim p_{\theta'}}[-\log p_{\theta}(X)] = \KL(P_{\theta'}\Vert P_{\theta}) + H(P_{\theta'})$, where $H$ is the entropy. Thus, for fixed $\theta'=\theta_0$, the risk $\cR(\theta,P_{\theta_0})$ is minimized at $P_{\theta_0}$, i.e., $\cR(\theta_0,P_{\theta_0})\leq \cR(\theta_1,P_{\theta_0})$. For the first part, if $\theta_1=\theta_0$, there is nothing to show. Otherwise, note that by the definition of $\theta_1$
    \begin{align*}
    &\cR(\theta_1, Q_1) \leq \cR(\theta_0, Q_1)\\
    \Rightarrow & \ \alpha\cR(\theta_1,P^*)+ (1-\alpha)\cR(\theta_1,P_{\theta_0}) \\ \leq & \ \alpha\cR(\theta_0,P^*)+ (1-\alpha)\cR(\theta_0,P_{\theta_0}).\end{align*}
    Now, using the fact that $\cR(\theta_0,P_{\theta_0}) \leq \cR(\theta_1, P_{\theta_0})$, the above inequality shows that if $\alpha>0$, then $\cR(\theta_1, P^*) \leq \cR(\theta_0, P^*)$.
\end{proof}

\section{Data Experiments}

All codes and experiments are on our Github page and will be made public soon. 

\subsection{Discussion on multinomial experiment}
We highlight the following observations from Figure \ref{fig:multinomial experiments}:

\begin{enumerate}
    \item Focusing on the top row (without accumulation), when $\alpha_t$ is fixed, any increasing sequence $n_t$ leads to consistent estimation, as evident from $R_t\to 0$. When $\alpha_t$ slowly decreases (second and third columns), the choice $n_t\asymp t^2$ (or $n_t\asymp t^3$) leads to consistent estimation, as suggested by our theoretical insights. Note that for $\alpha_t\asymp 1/\sqrt{t}$, it seems even the choice $n_t\asymp t$ leads to $R_t\to 0$. This suggests that while the condition in Proposition \ref{prop:decay} is sufficient, it might not be necessary.
    \item For the rapidly decaying $\alpha_t\asymp 1/t^2$ (the right-most column), $R_t$ does not go to 0 for any of the choices of $n_t$ considered here. However, we see that a faster rate $n_t\asymp t^3$ performs the best, and can get the limit $R_{\infty}$ to be less than $R_0$. Although we do not achieve consistency in this case, iterative training is not bad in the sense that the model eventually performs better than the benchmark $R_0$ attained by training on samples coming exclusively from the ground-truth distribution.
    \item Training with accumulation also has a similar trend. The major difference is visible when $n_t$ is constant. Without accumulation, $R_t$ becomes worse over iterations (similar to total collapse), while if the models are trained with accumulation, then it performs much better. For increasing $n_t$ settings, the improvement is not so clear -- it seems for such cases, training the model without accumulation might be better. For example, consider the third column, where $\alpha_t\asymp 1/t$. In this case, for both $n_t\asymp t^2$ and $n_t\asymp t^3$, the performance is better for the case without accumulation. With accumulation, it seems that $R_t$ does not go to 0.
\end{enumerate}
\subsection{Additional numerical experiments}
\label{app: additional experiments}
We consider the following four model classes, where the superscript $*$ indicates the true data distribution and the subscript $t$ indicates the parameter estimated at iteration $t$:
\begin{enumerate}
    \item Statistical Language Model in terms of Categorical Distribution: We set a fixed distribution over $K=20$ classes, parametrized by $\theta^*\in\Delta^{K-1}$. We evaluate the fitted model in terms of $R_t=\bbE\norm{\hat{\theta}_t-\theta^*}_2^2$.
    \item Gaussian model: We choose a fixed univariate Gaussian distribution $N(\mu_*,\sigma_*^2)$ as the true data distribution. We evaluate the fitted model in terms of $W_t=W_2^2(N(\hat{\mu}_t,\hat{\sigma}_t^2), N(\mu_*,\sigma_*^2))$.
    \item Gaussian Mixture Model (GMM): We choose a fixed univariate GMM with $K=5$ components. Associating the mixing measure $G=\sum_k pi_k\delta_{(\mu_k,\sigma_k^2)}$ with the GMM $\sum_k \pi_k N(\mu_k,\sigma_k^2)$, the performance of the fitted models is evaluated using the 1-Wasserstein distance between the mixing measures, i.e., $W_t=W_1(\hat{G}_t, G^*)$.
    \item Logistic Regression: We consider a simple binary classification task with features $X\sim N(0,I_2)$ (iid across all iterations and within an iteration). The true data generating model is $\log p(x)/(1-p(x)) = X^\top\theta^*$, $\theta^*\in\bbR^3$ (where we include an intercept term) and $p(x)=P(y=1|X=x)$. The fitted model is evaluated based on $E_t=\bbE\norm{\hat{\theta}_t - \theta^*}_2^2$.
\end{enumerate}

\begin{figure*}
    \centering
    \includegraphics[width=0.24\linewidth]{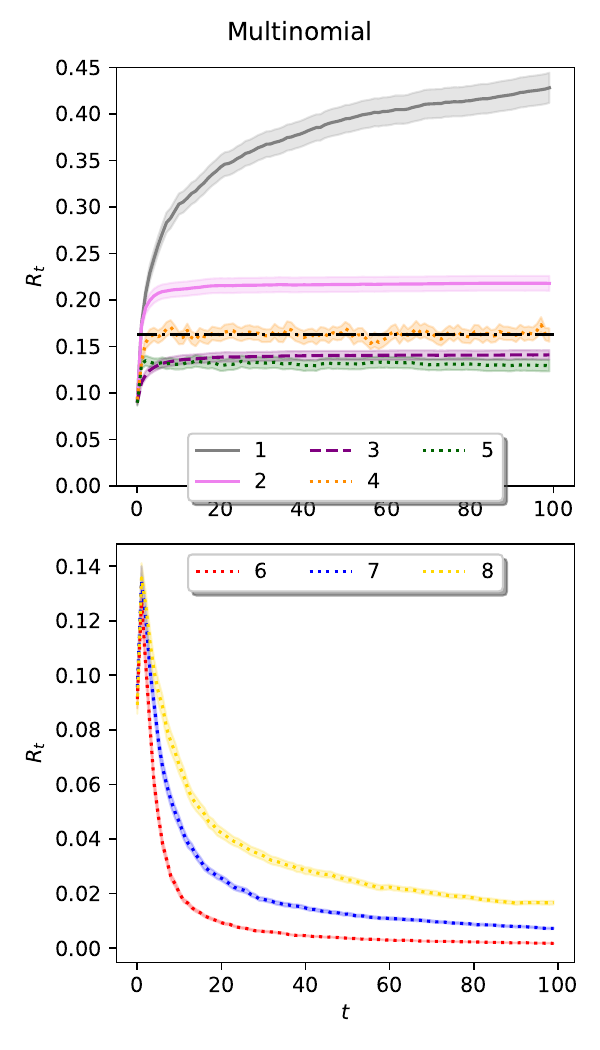}
    \includegraphics[width=0.24\linewidth]{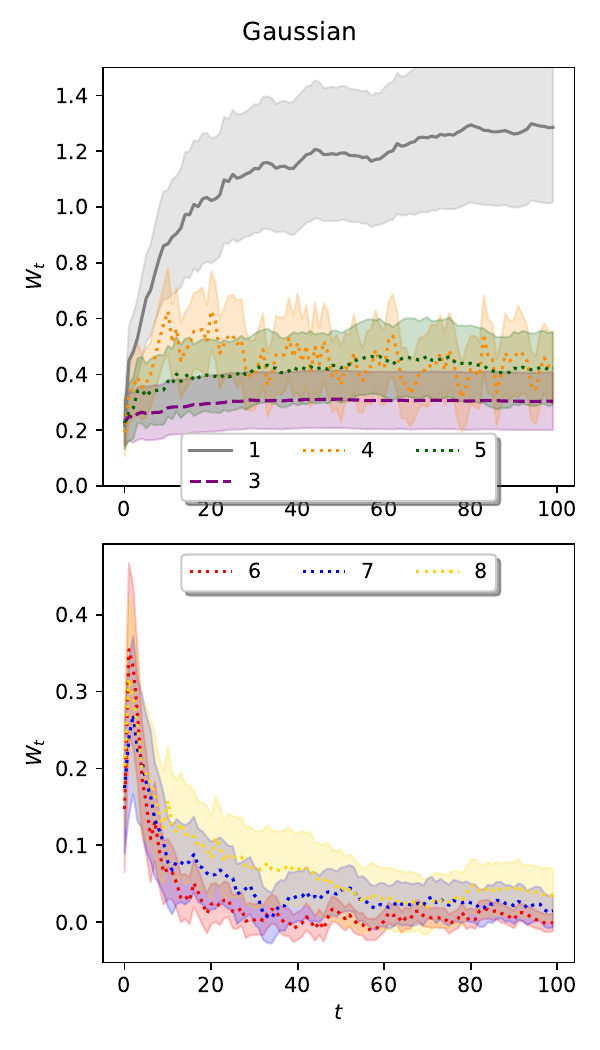}
    \includegraphics[width=0.24\linewidth]{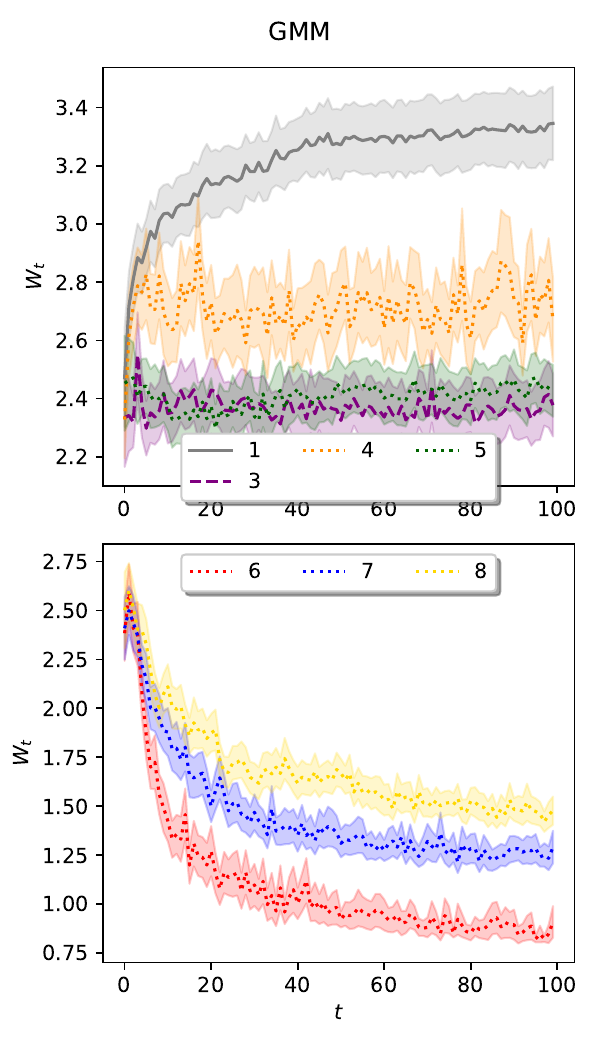}
    \includegraphics[width=0.24\linewidth]{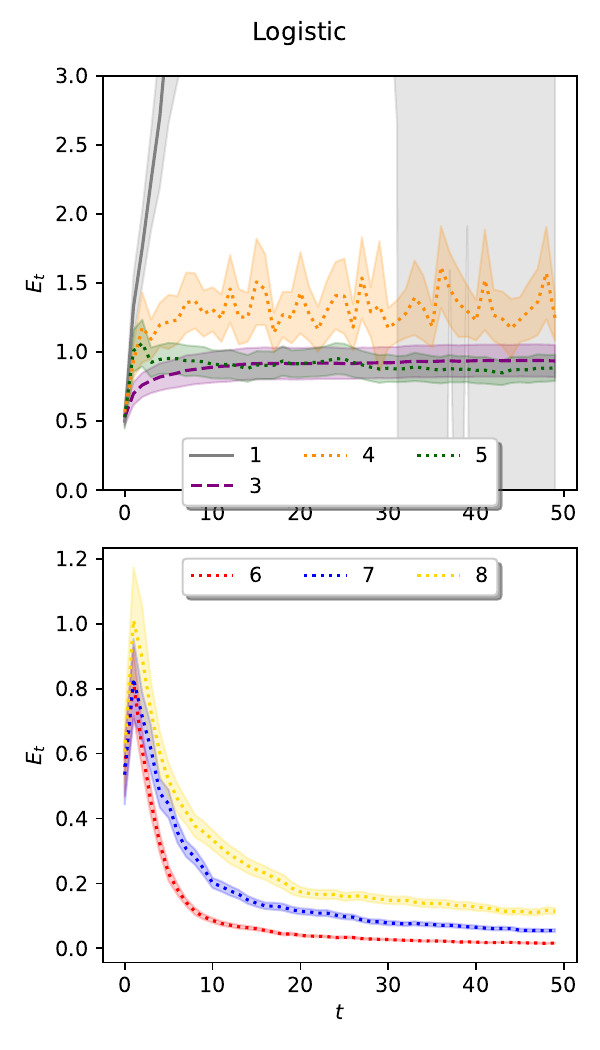}
    \caption{Simulation Studies: Iterative Evolution of iteratively  Trained Statistical Models -- Columns are model classes (categorical, Gaussian, GMM and logistic left to right) and the colors represent the different settings described above in the text. The two rows differentiate between settings where there is partial or total collapse (top row) and statistical consistency over iterations (bottom row)}
    \label{fig:simulation1}
\end{figure*}

\paragraph{Generic Simulation Details:} For each of the above settings, we consider iterative training up to $T$ rounds, where $T=100$ for the first three settings and $T=50$ for logistic regression. Each trajectory for each setting (combination of model from above and the synthetic/fresh data generation scheme below) is repeated $500$ times and the error plots are constructed along with a $95\%$ confidence interval. We consider 8 different settings based on how data is generated at each iteration. For each, at the first iteration the dataset is generated iid from the true data distribution model. Apart from setting 3, we do not accumulate data in the other settings.
\begin{enumerate}
    \item Setting 1: This is the pure synthetic data regime with linearly increasing sample size -- $n_t=n_0 t$ (where $n_0=10$ for categorical and Gaussian, $n_0=50$ for GMM and $n_0=80$ for logistic regression). In this setting $\alpha_t=0$.
    \item Setting 2: This is also pure synthetic data regime with quadratically increasing sample size $n_t=n_0 t^2$ -- we consider this setting only for the categorical model owing to large computational time for other classes due to the huge amount of data.
    \item Setting 3: This is the accumulation setting -- we accumulate all the data from previous iterations for the current time point, generating $n_0$ new synthetic samples from the latest fitted model, thereby making $n_t=n_0 t$. Here again, $\alpha_t=0$.
    \item Setting 4: Fixed sample size and fixed fresh proportion -- Use $n_0$ samples at each iteration drawn from a mixture model with $\alpha_t=\alpha_0$ fixed. We take $\alpha_0=0.3$.
    \item Setting 5: Linearly increasing sample size $n_t=n_0 t$ and decreasing truth proportion $\alpha_t=\alpha_0/t$.
    \item Setting 6: Linearly increasing sample size $n_t=n_0 t$ and fixed truth proportion $\alpha_t=\alpha_0$, so that $n_t\alpha_t=\Omega(t)$.
    \item Setting 7: Linearly increasing sample size $n_t=n_0 t$ and decreasing truth proportion $\alpha_t=\alpha_0/t^{1/2}$ so that $n_t\alpha_t =\Omega(t^{1/2})$.
    \item Setting 8: Linearly increasing sample size $n_t=n_0 t$ and decreasing truth proportion $\alpha_t=\alpha_0/t^{2/3}$ so that $n_t\alpha_t =\Omega(t^{1/3})$.
\end{enumerate}

The results from the experiments are shown in Figure \ref{fig:simulation1}. The left column shows the result for the categorical case, which we analyzed completely in Section \ref{sec:language model} and complements the experiments in Section \ref{sec:simulation}. The other columns show the results for the other models. Although the theoretical results in this work do not directly shed light on the behavior of these iterative evolution, we see that empirically, they behave very similar to the multinomial case, suggesting a wider applicability of our results. Firstly, in the pure synthetic data regime, we see that linearly growing $n_t$ leads to total collapse, while quadratic increase (setting 2) leads to collapse, but the error is bounded. This is similar to the accumulation setting 3. In the case of fresh data available, we see settings 4 and 5 in the upper row, where $n_t\alpha_t$ is fixed (the limit in setting 4, given by our analysis in the previous section, is shown as the dashed horizontal line and matches the results). The last three settings, corresponding to $n_t\alpha_t\uparrow \infty$, shown in the bottom row, all demonstrate that iterative training improves estimation even with synthetic data. Interestingly, in each model last 3 settings, the error increases a bit at the start and then starts decreasing.

\subsection{Additional details for GPT-2 experiment}\label{app: gpt2 details}
Here we describe the particular details of the network that we use for the simulations. At each iteration, we train the whole network from scratch (not just finetuning). We use \texttt{n\_layer}=4 layers with \texttt{n\_heads}=4 attention heads per layer, embedding size of \texttt{embd}=256, with context size \texttt{block\_size}=128 and trained with Adam (learning rate $3\times 10^{-4}$) optimizer using batch size \texttt{batch\_size}=32. Overall, the model has about 16 million parameters.

\paragraph{Simulation settings:} We consider 4 settings for these experiments and repeat each 10 times (each repetition involve randomly permuting the documents before the start of the train-validation split and the iterative process) over $T=10$ iterative rounds. Despite running on a GPU (28 nodes: 8x NVIDIA A40 48GB), one complete experiment took around 30-40 minutes, where surprisingly generating from the model took slightly more time than training (over 3000 epochs). The 3000 epochs was used not just for time constraints, but we found that going beyond leads to sever over-fitting, where the training loss went down but the validation loss kept increasing.

\section{Filtering via a real-vs-synthetic classifier}
\label{app:classifier}
At stage $t$, let $\cC_t$ be a classifier trained to distinguish real samples $Y\sim P^*$ from synthetic samples $Y\sim \widehat P_t$ (the latest model). We assume $\cC_t$ is applied to each observation in the next-stage training batch $Y^{t+1}=\{Y^{t+1}_1,\dots,Y^{t+1}_{n_{t+1}}\}$, and we retain only those classified as real, i.e. $\cC_t(Y^{t+1}_i)= 1$. Denote the retained (filtered) batch by $\tilde Y^{t+1}$, with random size $\tilde n_{t+1}:=|\tilde Y^{t+1}|$.

Recall that we assume the classifier $\cC_t$ has type-I error rate $e_{1t}$ and type-II error rate $e_{2t}$ with the null corresponding to $P^*$ and alternate corresponding to $\hat{P}_t$ , i.e.
\[
e_{1t}:=\mathbb{P}\big(\cC_t(Y)=0\mid Y\sim P^*\big),
\qquad
e_{2t}:=\mathbb{P}\big(\cC_t(Y)=1\mid Y\sim \widehat P_t\big).
\]
Then each point is retained with probability
\[
q_{t+1}
:=\mathbb{P}\big(\cC_t(Y)=1\big)
=\alpha_{t+1}(1-e_{1t})+(1-\alpha_{t+1})e_{2t},
\]
so $\tilde n_{t+1}\sim\mathrm{Binomial}(n_{t+1},q_{t+1})$ and hence
\[
\mathbb{E}[\tilde n_{t+1}]
=
q_{t+1}\,n_{t+1}
=
\big[\alpha_{t+1}(1-e_{1t})+(1-\alpha_{t+1})e_{2t}\big]\,n_{t+1}.
\]
Moreover, the filtered data $\tilde Y^{t+1}$ are i.i.d. from a mixture distribution
\[
\tilde P_{t+1}
=
\tilde\alpha_{t+1} P^* + (1-\tilde\alpha_{t+1})\widehat P_t,
\qquad
\tilde\alpha_{t+1}
=
\frac{\alpha_{t+1}(1-e_{1t})}{\alpha_{t+1}(1-e_{1t})+(1-\alpha_{t+1})e_{2t}}.
\]

\medskip
\noindent
Define the filtered estimator $\tilde\theta_{t}$ as the empirical class-frequency vector based on the filtered data $\tilde Y^{t}$:
\[
\tilde\theta_t(k)
:=
\frac{1}{\tilde n_t}\sum_{j=1}^{\tilde n_t}\mathbf 1\{\tilde Y^t_j=k\},
\qquad k\in\{1,\dots,K\},
\]
(with the convention that the procedure resamples until $\tilde n_t\ge1$, or equivalently all statements below are conditional on $\{\tilde n_t\ge1\}$).
We measure performance via the filtered risk $R_t^{f}:=\mathbb{E}\big\|\tilde\theta_t-\theta^*\big\|_2^2$, where the expectation is taken over randomness of both data $Y^t$ and induced by the filtering/classifier decisions (equivalently the induced random retained set / retained sample size).

\begin{assumption}[Label-agnostic filtering]\label{ass:label-agnostic}
For each stage $t\ge 0$, the classifier $\cC_t$ is applied to each observation $Y$ drawn from the mixture
$\alpha_{t+1} P^* + (1-\alpha_{t+1})\widehat P_t.$ Conditional on whether $Y$ is drawn from $P^*$ or from $\widehat P_t$, the classification decision of $\cC_t$ is independent of the categorical label of $Y$. Equivalently, for any class $k$,
$$\mathbb{P}\!\big(\cC_t(Y)=1\mid Y\sim P^*,\,Y=k\big)
=\mathbb{P}\!\big(\cC_t(Y)=1\mid Y\sim P^*\big), \ \mathbb{P}\!\big(\cC_t(Y)=1\mid Y\sim \widehat P_t,\,Y=k\big)
=
\mathbb{P}\!\big(\cC_t(Y)=1\mid Y\sim \widehat P_t\big),
$$
where these probabilities equal $1-e_{1t}$ and $e_{2t}$, respectively.
\end{assumption}

\begin{theorem}[Filtered-risk recurrence]\label{thm:filtered-risk-recurrence}
Let $\tilde R_t:=\mathbb{E}\|\tilde\theta_t-\theta^*\|_2^2$. Under the filtration model described above satisfying Assumption~\ref{ass:label-agnostic}, for each $t\ge 1$,
\begin{equation}\label{eq:tildeRt-recurrence}
\tilde R_t
=
\mathbb{E}\!\Big[\frac{1}{\tilde n_t}\Big]\,(1-\|\theta^*\|_2^2)
+
\Big(1-\mathbb{E}\!\Big[\frac{1}{\tilde n_t}\Big]\Big)\,(1-\tilde\alpha_t)^2\,\tilde R_{t-1},
\end{equation}
where $\mathbb{E}[1/\tilde n_t]$ is understood under the same convention used to define $\tilde\theta_t$ (i.e. conditioning on $\{\tilde n_t\ge 1\}$).
\end{theorem}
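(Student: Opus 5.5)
The plan is to mirror the proof of Theorem~\ref{thm:1}, now conditioning additionally on the random retained sample size $\tilde n_t$. First I would establish the conditional law of the filtered batch. Fix the past and write $\tilde\theta_{t-1}$ for the current estimate, so that $\widehat P_{t-1}=\text{Cat}(\tilde\theta_{t-1})$. Each raw observation is generated by first drawing a source indicator (real with probability $\alpha_t$), then a label from the corresponding categorical distribution, and then a keep/discard decision. By Assumption~\ref{ass:label-agnostic}, the keep decision depends only on the source and not on the label. Hence, conditional on being retained, the source is real with probability $\tilde\alpha_t$, and the label is distributed as $\text{Cat}(\tilde\alpha_t\theta^*+(1-\tilde\alpha_t)\tilde\theta_{t-1})$. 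Because the triples (source, label, decision) are i.i.d.\ across $j$, it follows that, conditionally on $\tilde\theta_{t-1}$ and on $\tilde n_t=m\ge1$, the retained labels are i.i.d.\ from this mixture. The key point, which again follows from label-agnosticism, is that $\tilde n_t\sim\text{Binom}(n_t,q_t)$ is independent of $\tilde\theta_{t-1}$ and of the retained labels' distribution. Consequently,
\[
m\,\tilde\theta_t(k)\mid(\tilde\theta_{t-1},\tilde n_t=m)\sim\text{Binom}\bigl(m,\ \tilde\alpha_t\theta^*(k)+(1-\tilde\alpha_t)\tilde\theta_{t-1}(k)\bigr).
\]

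Next I would compute moments. The conditional mean of $\tilde\theta_t(k)$ is $\tilde\alpha_t\theta^*(k)+(1-\tilde\alpha_t)\tilde\theta_{t-1}(k)$. By induction, with base case $\tilde\theta_0=\hat\theta_0$ unbiased, this gives $\mathbb E\tilde\theta_t=\theta^*$, so $\tilde R_t=\sum_k\Var\tilde\theta_t(k)$. I would then apply the law of total variance conditioning on $(\tilde\theta_{t-1},\tilde n_t)$. The variance-of-conditional-mean term equals $(1-\tilde\alpha_t)^2\Var\tilde\theta_{t-1}(k)$. The expected conditional variance is $\mathbb E[p(1-p)/\tilde n_t]$ with $p=\tilde\alpha_t\theta^*(k)+(1-\tilde\alpha_t)\tilde\theta_{t-1}(k)$. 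Independence of $\tilde n_t$ and $\tilde\theta_{t-1}$ lets this factor as $\mathbb E[1/\tilde n_t]\,\mathbb E[p(1-p)]$. Using $\mathbb Ep=\theta^*(k)$ and $\mathbb E p^2=\theta^*(k)^2+(1-\tilde\alpha_t)^2\Var\tilde\theta_{t-1}(k)$, I get $\mathbb E[p(1-p)]=\theta^*(k)(1-\theta^*(k))-(1-\tilde\alpha_t)^2\Var\tilde\theta_{t-1}(k)$. This is exactly the identity used in Theorem~\ref{thm:1}, with $1/n_{t+1}$ replaced by $\mathbb E[1/\tilde n_t]$. Summing over $k$ and using $\sum_k\theta^*(k)(1-\theta^*(k))=1-\|\theta^*\|_2^2$ yields \eqref{eq:tildeRt-recurrence}.

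The main obstacle is the first step: justifying rigorously that $\tilde n_t$ is independent of both $\tilde\theta_{t-1}$ and the conditional label distribution, and that the convention of conditioning on $\{\tilde n_t\ge1\}$ does not break this independence. The filtering is i.i.d.\ with a retention probability $q_t$ that does not depend on $\tilde\theta_{t-1}$, since the error rates $e_{1,t-1}$ and $e_{2,t-1}$ are treated as fixed. Conditioning on $\{\tilde n_t\ge1\}$ is therefore an event depending only on the keep decisions, and it leaves the i.i.d.\ retained-label law intact. I would state this explicitly, taking the error rates as deterministic, as in the setup. If they were allowed to depend on $\tilde\theta_{t-1}$, the factorization would fail and $\tilde\alpha_t$ would have to be treated as random.
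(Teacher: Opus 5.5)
Your proposal is correct and follows essentially the same route as the paper. Both condition on $(\tilde\theta_{t-1},\tilde n_t)$, split the risk into a conditional-variance part and a $(1-\tilde\alpha_t)^2$ bias part, factor out $\mathbb{E}[1/\tilde n_t]$ using independence of $\tilde n_t$ from $\tilde\theta_{t-1}$, and use unbiasedness to evaluate the remaining expectation; the paper does this in vector form via $\mathbb{E}\|\rho_t\|_2^2$ rather than coordinatewise, and your explicit thinning argument and deterministic-error-rate caveat make precise what the paper only asserts.
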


\begin{proof}
Fix $t\ge 1$ and write
\[
\rho_t:=\tilde\alpha_t\,\theta^*+(1-\tilde\alpha_t)\,\tilde\theta_{t-1}.
\]
Conditional on $(\tilde\theta_{t-1},\tilde n_t)$ (with $\tilde n_t\ge 1$), the filtered batch $\tilde Y^t$ is i.i.d. $\mathrm{Cat}(\rho_t)$ and $\tilde\theta_t$ is the empirical frequency vector based on $\tilde n_t$ samples. Hence,
\begin{equation}\label{eq:cond-bias-var}
\mathbb{E}\!\left[\|\tilde\theta_t-\theta^*\|_2^2 \,\big|\, \tilde\theta_{t-1},\tilde n_t\right]
=
\underbrace{\mathbb{E}\!\left[\|\tilde\theta_t-\rho_t\|_2^2 \,\big|\, \tilde\theta_{t-1},\tilde n_t\right]}_{\text{variance}}
+
\underbrace{\|\rho_t-\theta^*\|_2^2}_{\text{bias}^2}.
\end{equation}
The two terms are explicit:
\[
\mathbb{E}\!\left[\|\tilde\theta_t-\rho_t\|_2^2 \,\big|\, \tilde\theta_{t-1},\tilde n_t\right]
=
\frac{1-\|\rho_t\|_2^2}{\tilde n_t},
\qquad
\|\rho_t-\theta^*\|_2^2
=
(1-\tilde\alpha_t)^2\|\tilde\theta_{t-1}-\theta^*\|_2^2.
\]
Therefore,
\begin{equation}\label{eq:cond-step}
\mathbb{E}\!\left[\|\tilde\theta_t-\theta^*\|_2^2 \,\big|\, \tilde\theta_{t-1},\tilde n_t\right]
=
\frac{1-\|\rho_t\|_2^2}{\tilde n_t}
+
(1-\tilde\alpha_t)^2\|\tilde\theta_{t-1}-\theta^*\|_2^2.
\end{equation}

\noindent\emph{Unbiasedness of $\tilde\theta_t$.} We first verify directly that $\mathbb{E}\tilde\theta_1=\theta^*$. Conditional on $\widehat\theta_0$, the filtered sample $\tilde Y^1$ is i.i.d. from
\[
\tilde P_1=\tilde\alpha_1 P^* + (1-\tilde\alpha_1)\widehat P_0
=\mathrm{Cat}\!\Big(\tilde\alpha_1\theta^*+(1-\tilde\alpha_1)\widehat\theta_0\Big),
\]
so, for each coordinate $k$ (and conditioning on $\tilde n_1\ge 1$),
\[
\mathbb{E}\big[\tilde\theta_1(k)\mid \widehat\theta_0\big]
=
\tilde\alpha_1\theta^*(k)+(1-\tilde\alpha_1)\widehat\theta_0(k).
\]
Taking expectation and using $\mathbb{E}\widehat\theta_0=\theta^*$ yields
\[
\mathbb{E}\tilde\theta_1(k)
=
\tilde\alpha_1\theta^*(k)+(1-\tilde\alpha_1)\mathbb{E}\widehat\theta_0(k)
=
\theta^*(k).
\]
Thus $\mathbb{E}\tilde\theta_1=\theta^*$.
For the induction step, assume $\mathbb{E}\tilde\theta_{t-1}=\theta^*$. By the same reasoning,
\[
\mathbb{E}\big[\tilde\theta_t \mid \tilde\theta_{t-1}\big]
=
\tilde\alpha_t\theta^*+(1-\tilde\alpha_t)\tilde\theta_{t-1},
\]
so taking expectations gives
\[
\mathbb{E}\tilde\theta_t
=
\tilde\alpha_t\theta^*+(1-\tilde\alpha_t)\mathbb{E}\tilde\theta_{t-1}
=
\theta^*.
\]

Next, by the label-agnostic filtering assumption, the random count $\tilde n_t$ depends only on the keep probabilities (equivalently on $(\alpha_t,e_{1,t-1},e_{2,t-1})$) and not on the categorical labels; in particular, $\tilde n_t$ is independent of $\tilde\theta_{t-1}$, and hence independent of $\rho_t$ (which is a function of $\tilde\theta_{t-1}$). Taking expectation of \eqref{eq:cond-step} and using this independence yields
\begin{equation}\label{eq:uncond-step}
\tilde R_t
=
(1-\tilde\alpha_t)^2\tilde R_{t-1}
+
\mathbb{E}\!\Big[\frac{1}{\tilde n_t}\Big]\,
\mathbb{E}\!\big[1-\|\rho_t\|_2^2\big].
\end{equation}

Finally, expand $\|\rho_t\|_2^2$ and use $\mathbb{E}\tilde\theta_{t-1}=\theta^*$:
\[
\|\rho_t\|_2^2
=
\|\theta^*\|_2^2
+(1-\tilde\alpha_t)^2\|\tilde\theta_{t-1}-\theta^*\|_2^2
+2(1-\tilde\alpha_t)\theta^{*\top}(\tilde\theta_{t-1}-\theta^*),
\]
so after taking expectations the cross term vanishes and
\[
\mathbb{E}\|\rho_t\|_2^2
=
\|\theta^*\|_2^2+(1-\tilde\alpha_t)^2\tilde R_{t-1}.
\]
Hence
\[
\mathbb{E}\big[1-\|\rho_t\|_2^2\big]
=
1-\|\theta^*\|_2^2-(1-\tilde\alpha_t)^2\tilde R_{t-1}.
\]
Substitute this into \eqref{eq:uncond-step} and rearrange to obtain \eqref{eq:tildeRt-recurrence}:
\[
\tilde R_t
=
(1-\tilde\alpha_t)^2\tilde R_{t-1}
+
\mathbb{E}\!\Big[\frac{1}{\tilde n_t}\Big]\Big(1-\|\theta^*\|_2^2-(1-\tilde\alpha_t)^2\tilde R_{t-1}\Big)
=
\mathbb{E}\!\Big[\frac{1}{\tilde n_t}\Big](1-\|\theta^*\|_2^2)
+
\Big(1-\mathbb{E}\!\Big[\frac{1}{\tilde n_t}\Big]\Big)(1-\tilde\alpha_t)^2\tilde R_{t-1}.
\]
\end{proof}

\begin{remark}[Mean-field effective-sample-size approximation]\label{rem:mf}
A convenient closed-form approximation replaces $\tilde n_{t+1}$ by its mean $\mathbb{E}\tilde n_{t+1}=q_{t+1}n_{t+1}$, yielding
\[
\tilde{R}_{t+1}
\approx
(1-\tilde\alpha_{t+1})^2\tilde{R}_{t}
+
\frac{1-\|\theta^*\|_2^2-(1-\tilde\alpha_{t+1})^2\tilde{R}_{t}}{q_{t+1}n_{t+1}},
\qquad
q_{t+1}=\alpha_{t+1}(1-e_{1t})+(1-\alpha_{t+1})e_{2t}.
\]
This highlights the tradeoff induced by filtering: increasing $\tilde\alpha_{t+1}$ improves the ``fresh information'' fraction but decreases the effective sample size through $q_{t+1}$.
\end{remark}

\section{ Naturally Adaptive  Mixture Weights $\alpha_t$}
\label{app:adaptive}
Here we present details and additional examples of adaptive schemes of mixture weights $\alpha_t$, hinted in our discussion in Section~\ref{sec:dis}. 

For example, if the  mixture weights $\alpha_t$ follow either of the following adaptive conditions for all $t\geq 1$:
\begin{enumerate}
    \item[(i)] \((1 - \alpha_t)^2 \leq \text{min}\cbracket{1,\frac{1}{\lambda R_{t-1}}}\), for some constant \(\lambda > 0\);
    \item[(ii)] \((1 - \alpha_t)^2 \leq \exp(-\lambda R_{t-1})\), for some \(\lambda > 1\).
\end{enumerate}
then model collapse is avoided. Moreover, optimistic upper-bounds for the limiting risk can be obtained, suggesting natural improvement and limiting consistency of the iterative training process.

Bellow we provide the general result from which the above remarks follow.
\begin{proposition}
Let $g_{\lambda}(.):[0,2]\to(0,1]$ be any decreasing function with constant $\lambda>1$, such that for all $x\in[0,2]$, $g_{\lambda}(x)<\frac{1}{\lambda x}$.
Then collapse is avoided if $(1 - \alpha_t)^2\leq g_{\lambda}(R_{t-1})$ for all $t\geq1$ and the following holds:\begin{enumerate}
    \item For fixed sample size $n_t=n$, $$R_t \leq R_0 + \frac{n-1}{\lambda n};$$
    \item For any sequence $n_t \uparrow \infty$, $$R_{\infty}<\frac{1}{\lambda}.$$
\end{enumerate}
\end{proposition}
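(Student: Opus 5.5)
The plan is to substitute the adaptive condition directly into the exact recurrence of Theorem~\ref{thm:1}. Write $c_0:=n_0R_0=1-\Vert\theta^*\Vert_2^2\le 1$. For every $t\ge1$, Theorem~\ref{thm:1} together with $(1-\alpha_t)^2\le g_\lambda(R_{t-1})$ gives
\[
R_t \le \frac{c_0}{n_t} + \frac{n_t-1}{n_t}\, g_\lambda(R_{t-1})\,R_{t-1}.
\]
The key observation is that the hypothesis $g_\lambda(x)<1/(\lambda x)$ on $[0,2]$ yields the uniform bound
\[
g_\lambda(R_{t-1})R_{t-1}<\frac{1}{\lambda}.
\]
This needs $R_{t-1}\in[0,2]$, which is automatic because $\Vert\hat\theta_t-\theta^*\Vert_2^2\le \Vert\hat\theta_t\Vert_2^2+\Vert\theta^*\Vert_2^2\le 2$ on the simplex. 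So the multiplicative feedback term is replaced by a constant $1/\lambda$, independent of how large the previous risk was. This one-line bound is the entire engine of the proof: it turns the dependence on $R_{t-1}$ into something harmless. In particular, collapse (unbounded growth of risk relative to $R_0$, as in the $\alpha=0$ case where $R_\infty=nR_0$) is ruled out.

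For part 1, with $n_t=n$ we have $c_0/n=n_0R_0/n$. If $n_0=n$, as in the fixed-sample-size setting, this equals $R_0$, and the bound reads
\[
R_t\le R_0+\frac{n-1}{n}\cdot\frac{1}{\lambda},
\]
exactly the claim, valid for every $t\ge1$ directly (no induction needed). For $t=0$ the claim is trivial. I would state the convention $n_0=n$ explicitly. If $n_0\neq n$, the same argument gives $R_t\le n_0R_0/n+(n-1)/(\lambda n)$.

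For part 2, with $n_t\uparrow\infty$, the first term $c_0/n_t\to0$ and the coefficient $(n_t-1)/n_t\le1$. Hence
\[
\limsup_t R_t\le \limsup_t\Bigl(\frac{c_0}{n_t}+\frac{n_t-1}{n_t}\cdot\frac1\lambda\Bigr)=\frac1\lambda.
\]
The main obstacle is obtaining the \emph{strict} inequality $R_\infty<1/\lambda$, since the pointwise strict bound $g_\lambda(x)x<1/\lambda$ can degrade to equality in the limit.

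To get strictness, I will use monotonicity of $g_\lambda$. Suppose $L:=\limsup R_t$ (or $\liminf$, per the paper's convention) satisfies $L\ge 1/\lambda$, hence $L>0$. Along a subsequence with $R_{t-1}\to L'$ near $L$, we have $g_\lambda(R_{t-1})R_{t-1}\le g_\lambda(L'-\delta)\cdot R_{t-1}$. With some continuity or a $\limsup$ argument, one gets $\limsup_t R_t\le \sup_{x\in[L-\delta,2]} g_\lambda(x)x$, which is strictly below $1/\lambda$ if the supremum of $x\,g_\lambda(x)$ over a compact set away from the problematic points is attained. If $g_\lambda$ is only assumed decreasing, compactness alone may not suffice. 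In that case I would use the sharper fixed-point form: $L\le g_\lambda(L^-)L$, where $g_\lambda(L^-)$ is the left limit, which exists by monotonicity. Then $L\le L\,g_\lambda(L^-)$ with $L>0$ forces $g_\lambda(L^-)\ge1$, i.e. $g_\lambda\equiv1$ just left of $L$. This contradicts $g_\lambda(x)<1/(\lambda x)<1$ for $x$ near $L\ge1/\lambda$. That contradiction gives $R_\infty<1/\lambda$. If necessary I would note that the strict inequality holds under the mild extra regularity that $x\,g_\lambda(x)$ is upper semicontinuous.
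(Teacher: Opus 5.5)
Your part~1 and the non-strict half of part~2 are the paper's proof: substitute $(1-\alpha_t)^2\le g_\lambda(R_{t-1})$ into the recurrence of Theorem~\ref{thm:1} and use $x\,g_\lambda(x)<1/\lambda$. You also note that this needs $R_{t-1}\in[0,2]$, which holds on the simplex, and that part~1 tacitly assumes $n_0=n$. Both remarks are correct, and both make explicit what the paper leaves implicit.

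\textbf{The gap is the strict inequality $R_\infty<1/\lambda$.} You already have $\limsup_t R_t\le 1/\lambda$, so the only case to exclude is $L=1/\lambda$. That is exactly where your last step fails. For $x$ slightly below $1/\lambda$ we have $1/(\lambda x)>1$, so the hypothesis does not forbid $g_\lambda\equiv 1$ on $[0,1/\lambda)$, and $g_\lambda(L^-)=1$ is no contradiction.

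If ``decreasing'' allows flat pieces, as in the paper's own example $\min\{1,1/(\lambda x)\}$, no argument can close this, because the claim is false. Take the following setup:
\begin{itemize}
\item $g_\lambda=1$ on $[0,1/\lambda)$ and $g_\lambda(x)=1/(2\lambda x)$ on $[1/\lambda,2]$, which is non-increasing, $(0,1]$-valued, and $<1/(\lambda x)$ everywhere;
\item $(1-\alpha_t)^2=g_\lambda(R_{t-1})$;
\item $c_0>1/\lambda>R_0$, e.g.\ $\lambda=2$, $\theta^*$ uniform on ten tokens, $n_0=10$.
\end{itemize}
While $R_{t-1}<1/\lambda$ the step is purely synthetic, so $c_0-R_t=(1-1/n_t)(c_0-R_{t-1})$.
\begin{itemize}
\item With $n_t=n_0+t$, the risk re-enters $[1/\lambda,\,1/\lambda+c_0/n_t)$ infinitely often, falling back to about $1/(2\lambda)$ after each such time. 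Hence $\limsup_t R_t=1/\lambda$.
\item Alternatively, choose increasing $n_t$ with $\prod_{s\ge1}(1-1/n_s)=(c_0-1/\lambda)/(c_0-R_0)$. This is possible greedily, since $-\log(1-1/n)\downarrow 0$ with divergent sum. Then $R_t$ increases to exactly $1/\lambda$, so even the paper's $R_\infty$ equals $1/\lambda$.
\end{itemize}
The paper's proof has the same hole: letting $t\to\infty$ in $R_t<n_0R_0/n_t+\frac{n_t-1}{n_t}\cdot\frac{1}{\lambda}$ only yields $\le$.

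So an extra hypothesis is required, and your argument already contains the right ones.
\begin{itemize}
\item Your left-limit reduction $L\le g_\lambda(L^-)L$ is valid once you note $R_{t-1}\ge R_t-c_0/n_t$ (because $g_\lambda\le 1$), so $R_{t-1}\to L$ along your subsequence. It then gives a contradiction whenever $g_\lambda(x_0)<1$ for some $x_0<1/\lambda$. This holds, for example, if $g_\lambda$ is strictly decreasing, like the paper's $e^{-\lambda x}$.
\item Your upper-semicontinuity fallback also works; for non-increasing $g_\lambda$ it amounts to left-continuity on $(0,2]$. It makes $M:=\max_{x\in[0,2]}x\,g_\lambda(x)$ attained, hence $M<1/\lambda$. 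Then $R_t\le c_0/n_t+M$ gives $\limsup_t R_t\le M<1/\lambda$ with no limiting argument at all.
\end{itemize}
State one of these as an added assumption rather than claiming strictness from monotonicity alone.
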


\begin{proof}
 Recall \begin{align*}
      R_t &= \frac{n_{0}R_{0}}{n_t} + \frac{n_t-1}{n_t} (1-\alpha_t)^2 R_{t-1} \\
      &\leq \frac{n_{0}R_{0}}{n_t} + \frac{n_t-1}{n_t} g_{\lambda}(R_{t-1}) R_{t-1} \\
      &< \frac{n_{0}R_{0}}{n_t} + \frac{n_t-1}{n_t} \cdot \frac{1}{\lambda} \rbracket{\text{as } g_{\lambda}(x)<\frac{1}{\lambda x}}.
 \end{align*}  Replacing $n_t=n$, we directly get  $$R_t \leq R_0 + \frac{n-1}{\lambda n},$$ and for $n_t \uparrow \infty$, the first term vanishes and the ratio $\frac{n_t-1}{n_t}$ converges to 1 giving $$R_{\infty}<\frac{1}{\lambda}.$$
\end{proof}

\end{document}

%% file: macros.tex
\newcommand{\et}
{\widehat{\theta}_{t}}
\newcommand{\ett}
{\widehat{\theta}_{t+1}}

\newcommand{\at}
{\alpha_{t}}
\newcommand{\watt}
{\widehat{\alpha}_{t+1}}
\newcommand{\att}
{\alpha_{t+1}}

\newcommand{\nbracket}[1]{\left( #1 \right)}
\newcommand{\cbracket}[1]{\left\{ #1 \right\}}
\newcommand{\rbracket}[1]{\left[ #1 \right]}

\def\argmin{{\arg\min}}

\def\bbE{\mathbb{E}}

\def\bbR{\mathbb{R}}

\def\cC{\mathcal{C}}

\def\cP{\mathcal{P}}

\def\cR{\mathcal{R}}

\def\cU{\mathcal{U}}

\def\iid{{\sf iid}}

\def\KL{{\sf KL}}

\def\norm#1{\left\|#1\right\|}

\def\TV{{\sf TV}}

\def\Var{{\sf Var}}